\title{Performance Bounds for Policy-Based Average Reward Reinforcement Learning Algorithms}
\author{%
  Yashaswini Murthy 
  \\
  Electrical and Computer Engineering\\
  University of Illinois Urbana-Champaign\\
  Urbana, IL 61801 \\
  \texttt{ymurthy2@illinois.edu} \\
   \And
   Mehrdad Moharrami \\
   Electrical and Computer Engineering \\
   University of Illinois Urbana-Champaign \\
   Urbana, IL 61801 \\
   \texttt{moharami@illinois.edu}
   \And
   R. Srikant \\
   Electrical and Computer Engineering \\
   University of Illinois Urbana-Champaign \\
   Urbana, IL 61801 \\
   \texttt{rsrikant@illinois.edu}
   }
\newcommand{\onevec}{\boldsymbol{1}}
\newcommand{\R}{\mathbb{R}} 
\renewcommand{\P}[1]{\mathbb{P}\left(#1 \right)} 
\newcommand{\E}[1]{\mathbb{E}\left[ #1 \right]} 
\newcommand{\N}{\mathbb{N}} 
\renewcommand{\S}{\mathbb{S}} 
\definecolor{warningcol}{rgb}{.99,.1,.5}
\definecolor{todocol}{rgb}{.4,.4,.8}
\definecolor{sketchcol}{rgb}{.4,.4,.8}
\definecolor{outlinecol}{rgb}{.8,.4,.3}
\renewcommand{\E}{\mathbb{E}}
\newcommand{\Q}{\mathbb{Q}}
\newcommand{\pbr}[1]{\left( #1\right)}
\newcommand{\sbr}[1]{\left[ #1\right]}
\newcommand{\aln}[1]{\begin{align} #1 \end{align}}
\newcommand{\alns}[1]{\begin{align*} #1 \end{align*}}
\renewcommand{\S}{\mathcal{S}}
\newcommand{\A}{\mathcal{A}}
\renewcommand{\P}{\mathbb{P}}
\renewcommand{\Q}{\mathbb{Q}}
\newcommand{\T}{\mathsf{T}}
\newcommand{\Tq}{\T^{\mathsf{Q}}}
\DeclareMathOperator*{\argmax}{argmax}
\DeclareMathOperator*{\argmin}{argmin}
\theoremstyle{plain}
\newtheorem{theorem}{Theorem}[section]
\newtheorem{proposition}[theorem]{Proposition}
\newtheorem{lemma}[theorem]{Lemma}
\newtheorem{corollary}[theorem]{Corollary}
\theoremstyle{definition}
\newtheorem{definition}[theorem]{Definition}
\newtheorem{assumption}[theorem]{Assumption}
\theoremstyle{remark}
\newtheorem{remark}[theorem]{Remark}
\begin{document}

\maketitle

\begin{abstract}
  Many policy-based reinforcement learning (RL) algorithms can be viewed as instantiations of approximate policy iteration (PI), i.e., where policy improvement and policy evaluation are both performed approximately. In applications where the average reward objective is the meaningful performance metric, discounted reward formulations are often used with the discount factor being close to $1,$ which is equivalent to making the expected horizon very large. However, the corresponding theoretical bounds for error performance scale with the square of the horizon. Thus, even after dividing the total reward by the length of the horizon, the corresponding performance bounds for average reward problems go to infinity. Therefore, an open problem has been to obtain meaningful performance bounds for approximate PI and RL algorithms for the average-reward setting.  In this paper, we solve this open problem by obtaining the first finite-time error bounds for average-reward MDPs, and show that the asymptotic error goes to zero in the limit as policy evaluation and policy improvement errors go to zero.
\end{abstract}

\section{Introduction}
\label{introduction}

Reinforcement Learning algorithms can be broadly classified into value-based methods and policy-based methods. In the case of discounted-reward Markov Decision Processes (MDPs), value-based methods such as Q-learning \cite{watkins1992q, tsitsiklis1994asynchronous, jaakkola1993convergence, sutton2018reinforcement, bertsekas1995neuro}, fitted value iteration \cite{munos2008finite} and target network Q-learning \cite{chen2022target} can be viewed as approximately solving the fixed point of the Bellman optimality equation to find the value function and an approximately optimal control policy. In other words value iteration is approximately implemented for discounted reward MDPs \cite{bertsekas2011dynamic, bertsekas2012dynamic, puterman2014markov}. In policy-gradient methods \cite{agarwal2021theory}, a gradient step is used to improve the policy, somewhat similar to the policy improvement step in policy iteration for discounted-reward MDPs. In a recent paper \cite{chen2022sample}, it has been shown that many policy-based methods for discounted-reward MDPs can be viewed as special cases of approximate policy iteration. The classical results on approximate policy iteration \cite{bertsekas1995neuro} assume that the error in the policy evaluation and improvement steps are constant, independent of the iteration. The key idea in \cite{chen2022sample} is to show that a simple modification of the proof in \cite{bertsekas1995neuro} can be used to allow for iteration-dependent policy evaluation and improvement errors, which is then used to make the connection to policy-based methods. Our goal in this paper is to derive similar results for average-reward problems. Average-reward problems \cite{abounadi2001learning, Wan2021learning, Zhang2021average,mahadevan1996average,yu2009convergence,gosavi2004reinforcement,singh1994reinforcement} are, in some sense, harder to study than their discounted-reward counterparts. We now discuss why this is so by recalling the error bounds for discounted-reward problems and examining them in an appropriate limit to study their applicability to average-reward problems.

The fundamental result on approximate policy iteration for discounted reward MDPs which allows the connection to policy-based methods is the following \cite{bertsekas1995neuro}:
\begin{equation} \label{eq: BN}
        \limsup_{k\to\infty}\|J_{\mu_k}-J_*^\alpha\|_\infty \leq H_\alpha^2(\epsilon+2\alpha\delta),
\end{equation}
where $\alpha$ is the discount factor, $H_\alpha=1/(1-\alpha),$ $J_*^\alpha$ is the optimal value function, $J_{\mu_k}$ is the value function associated with the policy obtained after $k$ iterations of approximate policy iteration,  $\epsilon$ is the policy improvement error, and $\delta$ is the policy evaluation error. Thus, as $\epsilon,\delta\rightarrow 0,$ we recover the result that standard policy iteration converges. On the other hand, to understand whether the above bound is useful to study average-reward problems, we write \Cref{eq: BN} as
\begin{equation*}
        \limsup_{k\to\infty}\frac{1}{H_\alpha}\|J_{\mu_k}-J_*^\alpha\|_\infty \leq H_\alpha(\epsilon+2\alpha\delta).
\end{equation*}
Under mild conditions \cite{bertsekas2011dynamic, ross2014introduction}, it is well known that each element of the value function vector approaches the average-reward for each fixed policy and for the optimal policy, in the limit $\alpha\rightarrow 1$. Hence, the left-hand side of the above equation is an approximation to the error in approximate policy iteration for average-reward MDPs; the right-hand side which gives an upper bound on this error blows up to infinity, i.e., when $\alpha\rightarrow 1.$ Thus, unlike in the discounted-reward case, the above bound fails to even recover the well-known convergence of standard policy iteration in average-reward case in the limit $\epsilon, \delta\rightarrow 0$ (since we have to let $\alpha\rightarrow 1$ before letting $\epsilon, \delta\rightarrow 0$ ).  However, as mentioned in \cite{bertsekas1995neuro}, it is also well known that approximate policy iteration performs much better than the bound suggests. The main goal of our paper is to resolve this discrepancy between theory and practice.

\subsection{Contributions}

It is well known that the error bound for discounted-reward, approximate policy iteration is tight for unichain MDPs \cite{bertsekas1995neuro}.
Thus, it is impossible to improve upon the bound in general. However, as will be explained in a later section, it is easy to convert most reasonable unichain MDPs to MDPs where every stationary policy results in an irreducible Markov chain with an arbitrarily small loss of reward. So the natural question to ask is whether the bound can be dramatically improved for MDPs where every stationary policy results in an irreducible Markov chain. To the best of our knowledge, no such bound was available in the prior literature. 

Our main contributions are as follows:
\begin{itemize}
\item Under the assumption that every stationary policy induces an irreducible Markov chain, we first perform a Schweitzer transformation of the MDP \cite{schweitzer1971iterative} and obtain finite-time error bounds for average-reward approximate policy iteration.
\item Using the above finite-time error bounds, we prove an error bound of the form
\begin{equation*}
        \limsup_{k\to\infty} J^*-J_{\mu_k} \leq f(\epsilon,\delta),
\end{equation*}
for average-reward approximate policy iteration for a function $f$ such that $f(\epsilon,\delta)\rightarrow 0$ in the limit as $\epsilon,\delta\rightarrow 0.$ Note that this is in sharp contrast to the result that one can obtain from the discounted-reward analysis where the error bound blows up to infinity when applied to discount factors approaching $1,$ which is the appropriate limit to convert discounted-reward problems to average-reward problems.
\item The main difficulty in obtaining the above bound when compared to the discounted-reward case is the lack of infinity-norm contraction property for the Bellman operator in the average-reward case. In most analysis of average-reward problems, this difficulty is circumvented by the use of a span-contraction property \cite{puterman2014markov}. However, the span contraction property is insufficient for our purpose and therefore, we use a technique due to \cite{vanderwal} for the study of a different algorithm called modified policy iteration in \cite{puterman2014markov}.
\item Next, we extend the above analysis to obtain finite-iteration error bounds for the case where the policy evaluation and improvement errors can potentially vary from iteration to iteration. Further, we allow these errors to be random (as would be the case, for example, when one uses TD learning for policy evaluation and soft policy improvement) and obtain expected error bounds on approximate policy iteration after a finite number of iterations.
\item Our error bounds are in a form such that one can then apply them to many policy-based RL algorithms, i.e., we can simply plug in error bounds for specific policy evaluation or policy improvement algorithms. We illustrate the applicability of the results to several different policy-based RL methods (softmax update, greedy update and mirror descent update) studied in the literature \cite{mei2020global,bertsekas1995neuro,abbasi2019politex,agarwal2021theory}. 
\item While the main focus of our paper is on offline learning, we provide connections to regret bounds for online learning in average-reward MDPS as in \cite{abbasi2019politex,lazic2021improved}.
\end{itemize}

\subsection{Related Work}

Our work in this paper is most closely related to the work in \cite{chen2022sample}. The main contribution there was to recognize that an extension of the approximate policy iteration results in \cite{bertsekas1995neuro} can be used to derive finite-time error bounds for many RL-based algorithms. The idea of considering iteration-dependent policy evaluation and improvement bounds was also used to study policy iteration with function approximation in discounted-reward MDPs in \cite{winnicki2021role}. Our contribution here is to derive the first error bound for approximate policy iteration for average-reward MDPs and then write it in a form which can be applied to policy-based RL methods for average-reward TD learning. To the best of our knowledge, no known error bounds existed in prior literature for average-reward approximate policy iteration due to the fact that the corresponding bounds for discounted MDPs increase proportional to the square of length of the horizon. Thus, in the limit as the horizon increases to infinity, we have essentially reduced the dependence of the error from the square of the length of the horizon to linear in the length of the horizon.

RL algorithms have not been studied as extensively for the average-reward case as they are for discounted-reward MDPs. There is recent work~\cite{zhang2021finite} on average-reward TD-learning which we leverage to illustrate the applicability of our results. There are also asymptotic convergence results for policy gradient and actor-critic methods for average-reward MDPs \cite{tsitsiklis1999average,konda1999actor,bhatnagar2009natural,marbach2000call}. However, these results only prove convergence to a stationary point and there are no global convergence results. The original paper on natural policy gradient (NPG) is written for average-reward MDPs, but the extensive performance analysis of NPG in subsequent work such as \cite{agarwal2021theory} seems to only consider the discounted-reward case. Recent work~\cite{abbasi2019politex,lazic2021improved} considers mirror descent in average reward RL but do not provide performance results for other RL algorithms. In a later section, we compare our results to \cite{abbasi2019politex,lazic2021improved} in the special case of mirror descent-type policy updates

\section{Model and Preliminaries}

\subsection{Average Reward Formulation}

We consider the class of infinite horizon MDPs with finite state space $\S$, finite action space $\A$, and transition kernel $\mathbb{P}$, where $|\S|=n$ and $|\A|=m$. Let $\Delta\A$ denote the probability simplex over actions. We consider the class of randomized policies $\Pi=\{\mu:\S\to\Delta\A\}$, so that a policy $\mu$ assigns a probability vector over actions to each state. Given a policy $\mu$, the transition kernel for the underlying Markov process is denoted by $\P_\mu:\S\to\S$, where $\P_\mu(s'|s):=\sum_{a\in\A}\mu(a|s)\P(s'|s,a)$ is the probability of moving to state $s'\in\S$ from $s\in\S$ upon taking action $\mu(s)\in\A$. Associated with each state-action pair $(s,a)$, is a one-step reward which is denoted by $r_\mu(s):= \mathbb{E}_{a\sim \mu(s)} r(s,a)$.  

Let $J_\mu \in \R$ be the average reward associated with the policy $\mu\in\Pi$, i.e. $J_\mu$ is defined as:
\begin{equation*}
    J_\mu = \lim_{T\to\infty}\frac{\E_\mu\pbr{\sum_{i=0}^{T-1}r_\mu(s_i)}}{T}.
\end{equation*}
Here the expectation is taken with respect to the measure $\P_\mu$ associated with the policy $\mu$. Let $h_\mu \in \R^n$ be the relative value function associated with the policy $\mu$. Defining $\onevec\in \R^n$ to be the vector of all ones, the pair $(J_\mu,h_\mu)$ satisfies the following average reward Bellman equation:
\begin{equation}\label{BellmanEquation}
    J_\mu \onevec + h_\mu = r_\mu + \P_\mu h_\mu. 
\end{equation}
Let $\pi_\mu\in \R^{n}$ denote the stationary distribution associated with the kernel $\P_\mu$. Let $\P^*_\mu = \boldsymbol{1} \pi_\mu^\top\in \R^{n\times n}$ denote the matrix whose rows are $\pi_\mu^\top$. Using $\P^*_\mu = \P_\mu^*\P_\mu$, we get the following characterization of the average reward by multiplying both sides of \cref{BellmanEquation} with $\P^*_{\mu}$: $ J_\mu \onevec = \P^*_\mu r_\mu$. 

Let $J^* := \max_{\mu\in\Pi} J_\mu$ be the optimal average reward. From standard MDP theory, there exists $h^*\in\R^n$, for which the pair ($J^*$,$h^*$) satisfies the following Bellman optimality equation:
\begin{equation}\label{eq10}
    J^*\onevec + h^* = \max_{\mu\in\Pi} r_\mu + \P_\mu h^*.
\end{equation}
Let $\mu^*$ be the optimizing policy in \cref{eq10}. Then, similar reasoning shows that $J^* \onevec = \P^*_{\mu^*}r_{\mu^*}.$

The goal of dynamic programming and reinforcement learning is to determine this optimal average reward $J^*$ and its corresponding optimal policy $\mu^*$. Policy iteration is one of the most widely used dynamic programming algorithms for this purpose. It consists of two steps: (i) evaluation of the value function associated with a policy, and (ii) determining a greedy policy with respect to this evaluation. There are primarily two challenges that arise when applying such an algorithm: (i) high memory and time complexity when the state space is too large, and (ii) the unknown transition probability kernel that governs the dynamics of the MDP.

When the state space is too large, it may be computationally infeasible to perform policy iteration exactly. Existing literature suggests that approximate policy iteration in the context of discounted reward MDPs yields good performance. However, such an algorithm has not been studied in the context of average reward MDPs. In fact, the results obtained for the discounted reward MDPs yield vacuous performance bounds for average reward MDPs given the standard relationship between discounted reward and average reward MDPs. We explore this further in Section~\ref{SS: Rel-to-DRMDP}.

\subsection{Relationship to the Discounted Reward MDP} \label{SS: Rel-to-DRMDP}
To provide some background, we first discuss the approximate policy iteration algorithm in the context of discounted reward MDPs. Let $\alpha \in [0,1)$ denote the discount factor. The discounted reward associated with a deterministic policy $\mu \in \Pi$ starting from some state $s\in\S$ is denoted $J_\mu^\alpha(s)$ and is defined as:
\begin{equation*}
    J_\mu^\alpha(s) = \E_{\mu}\sbr{\sum_{i=0}^\infty \alpha^i r(s_i,\mu(s_i))\Big|s_0=s}.
\end{equation*}
The value function $J_\mu^\alpha \in \mathbb{R}^{n}$ is the unique fixed point of the Bellman operator $\T^\alpha_\mu:\R^{n}\to\R^{n}$ associated with the policy $\mu$, which is defined as $\T^\alpha_\mu J = r_\mu + \alpha \P_\mu J$. For each state $s\in\S$, let $J_*^\alpha(s)$ denote the optimal discounted reward, i.e., $J_*^\alpha(s) = \max_{\mu\in\Pi} J_\mu^\alpha(s).$ Similarly, $J_*^\alpha \in \mathbb{R}^{n}$ is the unique fixed point of the optimality Bellman operator 
$\T^\alpha:\R^{n}\to\R^{n}$, which is defined as $\T^\alpha J = \max_{\mu\in\Pi} \pbr{r_\mu + \alpha \P_\mu J}$. Algorithm~\ref{Alg:DiscAPI} is Approximate Policy Iteration for the discounted reward MDP:

\begin{algorithm}
\caption{Approximate Policy Iteration: Discounted Reward}\label{Alg:DiscAPI}
\begin{algorithmic}
\State Require $J_0 \in \R^n$ 
\For{$k = 0,1,2,\ldots$}
\State 1. Compute $\mu_{k+\!1} \!\!\in\! \Pi$ such that $\|\T^\alpha J_k\!-\!\T^\alpha_{\mu_{k+1}} \!J_k\|_\infty \!\leq\! \epsilon$ \Comment{Approximate Policy Improvement} 
\State 2. Choose $J_{k+1}$ such that $\|J_{k+1}-J_{\mu_{k+1}}\|_\infty \leq \delta$ 
\Comment{Approximate Policy Evaluation}
\State \hspace{3mm} where $J_{\mu_{k+1}}=\T^\alpha_{\mu_{k+1}}J_{\mu_{k+1}}$ 
\EndFor
\end{algorithmic}
\end{algorithm}

\begin{theorem}
    Let $\mu_k$ be the sequence of policies generated from the approximate policy iteration algorithm (Algorithm~\ref{Alg:DiscAPI}). Then the performance error is bounded as:
    \begin{equation}\label{apidisc}  \limsup_{k\to\infty}\|J_{\mu_k}-J_*^\alpha\|_\infty \leq \frac{\epsilon+2\alpha\delta}{(1-\alpha)^2}.
    \end{equation}
\end{theorem}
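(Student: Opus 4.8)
The plan is to reduce the statement to a scalar recursion. Writing $g_k := J_*^\alpha - J_{\mu_k}$ (which is pointwise nonnegative by optimality of $J_*^\alpha$) and $a_k := \|g_k\|_\infty = \|J_{\mu_k}-J_*^\alpha\|_\infty$, I would aim to establish the one-step bound
\begin{equation*}
a_{k+1} \le \alpha\, a_k + \frac{\epsilon + 2\alpha\delta}{1-\alpha}.
\end{equation*}
Taking $\limsup_{k\to\infty}$ on both sides and solving the resulting fixed-point inequality $a \le \alpha a + (\epsilon+2\alpha\delta)/(1-\alpha)$ then yields $\limsup_k a_k \le (\epsilon+2\alpha\delta)/(1-\alpha)^2$, which is the claim. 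Throughout I would use that $\T^\alpha$ and each $\T^\alpha_\mu$ are monotone $\alpha$-contractions in $\|\cdot\|_\infty$ satisfying $\T^\alpha(J+c\onevec)=\T^\alpha J+\alpha c\onevec$, that $J_{\mu_k}$ is the fixed point of $\T^\alpha_{\mu_k}$, and I would let $\mu^*$ denote an optimal (i.e.\ $J_*^\alpha$-greedy) policy, so that $J_*^\alpha = r_{\mu^*}+\alpha\P_{\mu^*}J_*^\alpha$.

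The recursion rests on two building blocks. First, chaining the approximate evaluation bound $J_{\mu_k}\ge J_k-\delta\onevec$, the approximate greedy bound $\T^\alpha_{\mu_{k+1}}J_k\ge\T^\alpha J_k-\epsilon\onevec$, and monotonicity/constant-shift of $\T^\alpha$, I would obtain the improvement inequality
\begin{equation*}
\T^\alpha_{\mu_{k+1}} J_{\mu_k} \ge \T^\alpha J_{\mu_k} - (\epsilon + 2\alpha\delta)\onevec \ge J_{\mu_k} - (\epsilon + 2\alpha\delta)\onevec,
\end{equation*}
the last step because $\T^\alpha J_{\mu_k}\ge\T^\alpha_{\mu_k}J_{\mu_k}=J_{\mu_k}$. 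Second, applying the monotone operator $\T^\alpha_{\mu_{k+1}}$ repeatedly to this inequality starting from $J_{\mu_k}$ and summing the geometric series $\sum_j \alpha^j$ (the iterates converge to the fixed point $J_{\mu_{k+1}}$) would give the one-sided degradation control
\begin{equation*}
J_{\mu_{k+1}} \ge J_{\mu_k} - \frac{\epsilon + 2\alpha\delta}{1-\alpha}\onevec.
\end{equation*}

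To finish, I would combine the first block with $\T^\alpha J_{\mu_k}\ge\T^\alpha_{\mu^*}J_{\mu_k}=J_*^\alpha-\alpha\P_{\mu^*}g_k$ to get $\T^\alpha_{\mu_{k+1}}J_{\mu_k}\ge J_*^\alpha-\alpha\P_{\mu^*}g_k-(\epsilon+2\alpha\delta)\onevec$, and then use the exact identity $J_{\mu_{k+1}}=\T^\alpha_{\mu_{k+1}}J_{\mu_k}+\alpha\P_{\mu_{k+1}}(J_{\mu_{k+1}}-J_{\mu_k})$ to arrive at
\begin{equation*}
g_{k+1} \le \alpha\P_{\mu^*} g_k + (\epsilon+2\alpha\delta)\onevec - \alpha\P_{\mu_{k+1}}\!\left(J_{\mu_{k+1}} - J_{\mu_k}\right).
\end{equation*}
\textbf{The main obstacle is the correction term on the right.} A naive treatment re-expands $J_{\mu_{k+1}}-J_{\mu_k}$ back in terms of $g_{k+1}$, which puts a factor $(I-\alpha\P_{\mu_{k+1}})$ on the left and, upon inversion, amplifies the error by an extra $1/(1-\alpha)$; this only reproduces the vacuous discounted-to-average scaling and gives no contraction for $\alpha$ near $1$. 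The crux is to bound the correction \emph{one-sidedly}: since $\alpha\P_{\mu_{k+1}}\ge 0$ and the second block gives $J_{\mu_{k+1}}-J_{\mu_k}\ge-\tfrac{\epsilon+2\alpha\delta}{1-\alpha}\onevec$, we get $-\alpha\P_{\mu_{k+1}}(J_{\mu_{k+1}}-J_{\mu_k})\le\tfrac{\alpha(\epsilon+2\alpha\delta)}{1-\alpha}\onevec$. Substituting this and simplifying the constant, namely $(\epsilon+2\alpha\delta)+\tfrac{\alpha(\epsilon+2\alpha\delta)}{1-\alpha}=\tfrac{\epsilon+2\alpha\delta}{1-\alpha}$, yields $g_{k+1}\le\alpha\P_{\mu^*}g_k+\tfrac{\epsilon+2\alpha\delta}{1-\alpha}\onevec$; taking $\|\cdot\|_\infty$ and using $\|\P_{\mu^*}g_k\|_\infty\le a_k$ (stochasticity of $\P_{\mu^*}$) gives exactly the scalar recursion above, with contraction modulus $\alpha$ rather than $\alpha/(1-\alpha)$.
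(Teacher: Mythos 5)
Your proof is correct, and it is essentially the proof the paper relies on: the paper itself gives no argument but defers to the classical result in the cited Bertsekas text, whose structure is exactly what you reconstruct — the improvement inequality $\T^\alpha_{\mu_{k+1}}J_{\mu_k}\ge \T^\alpha J_{\mu_k}-(\epsilon+2\alpha\delta)\onevec$, the geometric-series degradation bound $J_{\mu_{k+1}}\ge J_{\mu_k}-\tfrac{\epsilon+2\alpha\delta}{1-\alpha}\onevec$, and the scalar recursion $a_{k+1}\le\alpha a_k+\tfrac{\epsilon+2\alpha\delta}{1-\alpha}$ whose fixed point gives the $(1-\alpha)^{-2}$ bound. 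Your one-sided treatment of the correction term $-\alpha\P_{\mu_{k+1}}(J_{\mu_{k+1}}-J_{\mu_k})$ is precisely the device in that classical argument, so there is nothing to flag.
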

\begin{proof}
    The proof of this theorem can be found in \cite{bertsekas2011dynamic}
\end{proof}
From literature \cite{ross2014introduction,bertsekas2011dynamic}, we know that the average reward $J_\mu$ associated with any policy $\mu$ is related to its discounted reward $J^\alpha_\mu(s)$ counterpart as follows:
\begin{equation*}
    J_\mu = \lim_{\alpha\to 1} (1-\alpha)J^\alpha_\mu(s). 
\end{equation*}
Note that the above relation is independent of the state $s\in\S$, as the average reward does not depend on the initial state. Multiplying \cref{apidisc} with $(1-\alpha)$, and letting $\alpha\to 1$, still yields an approximation performance bound with $(1-\alpha)$ in the denominator. This term blows up to infinity as $\alpha\to 1$. Note that this bound is known to be tight under unichain Markov structure of the probability transition kernel \cite{bertsekas2011dynamic}. However, in practice it is observed that approximate policy iteration works well in the average reward MDP case although these theoretical bounds are not representative of this performance. To the best of our knowledge, we are unaware of an average reward approximate policy iteration performance bound, when the policies induce an irreducible Markov chain. We bridge this gap between theory and practice by providing a theoretical analysis of approximate policy iteration in the average reward MDP setting, with non trivial performance bounds.  

\section{Approximate Policy Iteration for Average Reward}
A crucial component of the proof of convergence of approximate policy iteration in the context of discounted reward MDPs is the contraction due to the discount factor $\alpha$ which is absent in the average reward setting (since $\alpha = 1$). To get some source of contraction, we have to make some assumptions on the MDP.

\begin{assumption} We make the following assumptions:
\begin{enumerate}[label=(\alph*)]
    \item Every deterministic policy $\mu \in \Pi$ induces an irreducible Markov Chain $\P_\mu$.
    \item For all policies, the diagonal elements of the probability transition matrix are positive, i.e., the Markov chain stays in the same state with non-zero probability.
\end{enumerate}  
\label{assump3.1}
\end{assumption}
Assumption (a) need not be satisfied by all MDPs. However, in order to satisfy this assumption, we consider a modified MDP where at every time step with probability $\varepsilon,$ an action is chosen from the set of all possible actions with equal probability. Simultaneously, with probability $1-\varepsilon$, we choose an action dictated by some policy. The problem then is to choose this policy optimally. For most MDPs of interest, this small modification will satisfy our assumption with a small $O(\varepsilon)$ loss in performance. It is straightforward to show the $O(\varepsilon)$ loss, but we include the proof in the appendix for completeness. Assumption (b) is without loss of generality in the following sense: there exists a simple transformation which essentially leaves the MDP unchanged but ensures that this assumption is satisfied. This transformation known as the aperiodicity transformation or Schweitzer transformation was introduced in \cite{schweitzer1971iterative}. For more details, the reader is referred to the appendix. In the remainder of this paper, we assume that all MDPs are transformed accordingly to ensure the assumptions are satisfied. One consequence of Assumption (b) is the following lemma which will be useful to us later.

\begin{lemma}\label{lemma3.3}
    There exists a $\gamma>0$ such that, for any policy $\mu \in \Pi$, 
    \begin{equation*}
    \min_{i\in\S} \pi_{\mu}(i)\geq \gamma,
    \end{equation*}    
    where $\pi_{\mu}$ is stationary distribution over states associated with the policy $\mu$.
\end{lemma}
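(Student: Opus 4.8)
The plan is to combine compactness of the policy space with continuity and strict positivity of the stationary distribution. Since $\S$ and $\A$ are finite, the policy space $\Pi\cong(\Delta\A)^{n}$ is a finite product of probability simplices and is therefore compact. If I can show that $g(\mu):=\min_{i\in\S}\pi_\mu(i)$ is a continuous, strictly positive function on $\Pi$, then the extreme value theorem would immediately yield a positive minimum $\gamma:=\min_{\mu\in\Pi}g(\mu)>0$, which is exactly the claimed bound.

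The key preliminary step, and the one I expect to be the main obstacle, is to upgrade Assumption~\ref{assump3.1}(a) from deterministic policies to \emph{all} randomized policies $\mu\in\Pi$, so that $\pi_\mu$ is well defined and strictly positive everywhere on $\Pi$ (including the boundary, where $\mu$ may become deterministic). To do this I would fix $\mu\in\Pi$, and for each state $s$ pick an action $a_s$ with $\mu(a_s\mid s)>0$; let $\mu_d$ be the deterministic policy with $\mu_d(s)=a_s$. Then for all $s,s'$,
\[
\P_\mu(s'\mid s)\;=\;\sum_{a\in\A}\mu(a\mid s)\,\P(s'\mid s,a)\;\ge\;\mu(a_s\mid s)\,\P(s'\mid s,a_s),
\]
so the support of $\P_\mu(\cdot\mid s)$ contains that of $\P_{\mu_d}(\cdot\mid s)$. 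Hence the directed transition graph of $\P_\mu$ contains that of $\P_{\mu_d}$, which is strongly connected by Assumption~\ref{assump3.1}(a); therefore $\P_\mu$ is irreducible as well. By Perron--Frobenius, each such $\P_\mu$ has a unique stationary distribution $\pi_\mu$ with $\pi_\mu(i)>0$ for every $i\in\S$.

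It then remains to verify continuity. I would note that $\mu\mapsto\P_\mu$ is affine, hence continuous, in the entries of $\mu$, and that for an irreducible stochastic matrix the eigenvalue $1$ is simple, so the stationary distribution — equivalently the unique solution of $\pi_\mu^\top(I-\P_\mu)=0$ together with $\pi_\mu^\top\onevec=1$ — depends continuously on $\P_\mu$ (for instance by Cramer's rule, since the relevant determinants are polynomials in the entries and the normalization makes the solution unique). Composing, $\mu\mapsto\pi_\mu$ is continuous on $\Pi$, so $g$, being a minimum of finitely many continuous functions, is continuous and strictly positive; the extreme value theorem on the compact set $\Pi$ then closes the argument.

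A remark on why I prefer the compactness route to a direct quantitative estimate. One could instead try to lower bound $\pi_\mu(i)$ explicitly, using that irreducibility on $n$ states guarantees reachability within $n$ steps and giving a bound in terms of the smallest positive entries of $\P_\mu$. The difficulty is that those entries can be driven to zero as $\mu$ approaches the boundary of $\Pi$, whereas the true stationary probabilities stay bounded away from $0$ there because the limiting chain is still irreducible. The compactness argument captures exactly this behaviour and avoids the degenerate bounds the naive estimate would produce, so the only genuine content is the uniform preservation of irreducibility established in the previous paragraph.
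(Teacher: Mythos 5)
Your proof is correct, but it takes a genuinely different route from the paper's. The paper disposes of \cref{lemma3.3} in two sentences: it cites \cite{vanderwal}, where the bound is proved (in a more general setting) for deterministic policies, and then asserts that, because the aperiodicity transformation guarantees a positive holding probability in every state, expected return times are bounded, so the bound "holds true for randomized policies as well" --- implicitly via the Kac formula $\pi_\mu(i) = 1/\E_i[\text{return time to } i]$. Your argument is instead self-contained: you extend irreducibility from deterministic to randomized policies through the support inclusion $\P_{\mu_d}(\cdot\mid s) \ll \P_\mu(\cdot\mid s)$ (choosing $a_s$ in the support of $\mu(\cdot\mid s)$), invoke Perron--Frobenius for strict positivity of each $\pi_\mu$, and obtain uniformity in $\mu$ from compactness of $\Pi$ together with continuity of $\mu \mapsto \pi_\mu$. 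What your route buys: it is rigorous precisely where the paper is informal --- the uniform boundedness of expected return times over \emph{all} randomized policies is itself a uniformity claim that needs a compactness argument or an explicit estimate, so the paper's extension step is essentially asserted rather than proved --- and it makes clear that only Assumption (a) is needed for this lemma, since positivity of the stationary distribution of a finite irreducible chain does not require aperiodicity, so Assumption (b) plays no real role here. What the paper's route buys: brevity, and a more quantitative flavor, since a return-time analysis in the style of \cite{vanderwal} can in principle produce an explicit value of $\gamma$, whereas your compactness argument is purely existential and gives no handle on how small $\gamma$ may be.
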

\begin{proof}
This lemma has been proved in a more general sense in the context of deterministic policies in \cite{vanderwal}. Since the aperiodicity transformation ensures a non-zero probability of staying in the same state, the expected return times are bounded and the lemma holds true for randomized policies as well. 
\end{proof}

Prior to presenting the algorithm, consider the following definitions. 
The average-reward Bellman operator $\T_\mu:\R^{n}\to\R^{n}$ corresponding to a policy $\mu$ is defined as $\T_\mu h =  {r}_\mu +  {\P}_\mu h.$ The average-reward optimal Bellman operator $\T:\R^{n}\to\R^{n}$ is defined as $\T h = \max_{\mu\in\Pi} {r}_\mu +  {\P}_\mu h.$ The value function $h_\mu$ associated with a policy $\mu$ satisfies the Bellman~\cref{BellmanEquation} with single step reward $ {r}_\mu$ and the transition kernel $ {\P}_\mu$. Note that $h_\mu$ is unique up to an additive constant. 

Solving for $\pbr{ {J}_\mu, {h}_\mu}$ using~\cref{BellmanEquation} involves setting the value function for some fixed state $x^*$ as $0$ (since this reduces the system from being underdetermined to one with a unique solution). Further, the value function $h_\mu$ can be alternatively expressed as the $\lim_{m\to\infty}\T^m_\mu h_0$ for any $h_0\in\R^{n}$ as is the case with discounted reward MDPs. However, unlike the discounted reward Bellman Equation, there is no discount factor $\alpha < 1$ preventing the value function from exploding to infinity. Hence, we consider value function computed using the relative Bellman operator $\widetilde{\T}_\mu$ defined as $$\widetilde{\T}_\mu h =  {r}_\mu +  {\P}_\mu h -  {r}_\mu(x^*)\onevec - \pbr{ {\P}_\mu h}(x^*)\onevec.$$

We now state the algorithm for Approximate Policy Iteration for Average Reward MDPs. 

\subsection{Approximate Policy Iteration Algorithm for Average Reward MDPs}
\begin{algorithm}[H]
\caption{Approximate Policy Iteration: Average Reward}\label{Alg: API}
\begin{algorithmic}[1]
\State Require $h_0 \in \R^n$ 
\For{$k = 0,1,2,\ldots$} 
\State 1. Compute $\mu_{k+1}\! \in \!\Pi$ such that $\|\T h_k\!-\!\T_{\mu_{k+1}}h_k\|_\infty \leq \epsilon$ \Comment{Approx. Policy Improvement}
\State 2. Compute $h_{k+1}$ such that $\|h_{k+1}-h_{\mu_{k+1}}\|_\infty \leq \delta$  \Comment{Approx. Policy Evaluation}
\State \hspace{3mm} where $h_{\mu_{k+1}}=\lim_{m\to\infty}\widetilde{\T}^{m}_{\mu_{k+1}}h_k$ 
\EndFor
\end{algorithmic}
\end{algorithm}

\subsection{Performance bounds for Average Reward Approximate Policy Iteration}
We are now ready to present the main result of this work. 
\begin{theorem}\label{maintheorem}
    Let $ {J}^*$ be the optimal average reward of an MDP that satisfies \cref{assump3.1}. The sequence of policies $\mu_k$ generated by Algorithm \ref{Alg: API} and their associated average rewards $ {J}_{\mu_k}$ satisfy the following bound:
    \begin{align}\label{eq:maintheorem}
        \Big( {J}^*-& {J}_{\mu_{k+1}}\Big)  \leq \underbrace{\frac{\pbr{1-\pbr{1-\gamma}^k}}{\gamma}\pbr{\epsilon\pbr{\gamma+1}+2\delta}}_\text{approximation error} \nonumber + \underbrace{\pbr{1-\gamma}^k\pbr{ {J}^*-{\min_i\pbr{\T h_0-h_0}(i) + \epsilon}}}_\text{initial condition error}.
    \end{align}
\end{theorem}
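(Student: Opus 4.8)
The plan is to isolate a single scalar potential that contracts geometrically at rate $1-\gamma$ per iteration, with $\gamma$ from \Cref{lemma3.3} playing exactly the role that the discount factor plays in the discounted analysis. Writing $g_k := \T h_k - h_k$, the candidate potential is the gap $b_k := J^* - \min_i g_k(i)$. First I would establish the sandwich inequality $\min_i g_k(i) \le J^* \le \max_i g_k(i)$, valid for every $h_k$: letting $\nu$ be greedy for $h_k$ so that $\T h_k = r_\nu + \P_\nu h_k$ and multiplying $\T h_k - h_k$ on the left by the stationary distribution $\pi_\nu^\top$ annihilates the $\P_\nu h_k - h_k$ term and leaves $J_\nu$, giving $\min_i g_k(i) \le J_\nu \le J^*$; the upper bound is symmetric, using $\T h_k \ge r_{\mu^*} + \P_{\mu^*} h_k$ and $\pi_{\mu^*}^\top$. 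In particular $b_k \ge 0$.

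The key lever is \Cref{lemma3.3}: for any probability vector $\pi$ with $\pi(i) \ge \gamma$ for all $i$ and any vector $g$, placing the guaranteed mass $\gamma$ on the maximizing coordinate yields $\pi^\top g \ge \gamma \max_i g(i) + (1-\gamma)\min_i g(i)$. I would apply this to $\pi_{\mu_{k+1}}$ and $g_k$, and combine it with the identity $J_{\mu_{k+1}} = \pi_{\mu_{k+1}}^\top(\T_{\mu_{k+1}} h_k - h_k)$ and the policy-improvement guarantee $\T_{\mu_{k+1}} h_k \ge \T h_k - \epsilon\onevec$, to obtain $J_{\mu_{k+1}} \ge \gamma \max_i g_k(i) + (1-\gamma)\min_i g_k(i) - \epsilon$. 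Subtracting from $J^*$ and using $J^* \le \max_i g_k(i)$ from the sandwich cancels the $\max$ term and produces the one-step performance bound $J^* - J_{\mu_{k+1}} \le (1-\gamma) b_k + \epsilon$.

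To close the recursion I must control $b_{k+1}$ by $b_k$, and this is where the evaluation step and \Cref{assump3.1}(b) enter. Writing $h_{k+1} = h_{\mu_{k+1}} + e_{k+1}$ with $\|e_{k+1}\|_\infty \le \delta$, nonexpansiveness of $\T$ together with the relative Bellman identity $\T_{\mu_{k+1}} h_{\mu_{k+1}} - h_{\mu_{k+1}} = J_{\mu_{k+1}}\onevec$ give $g_{k+1} \ge J_{\mu_{k+1}}\onevec - 2\delta\onevec$, hence $\min_i g_{k+1}(i) \ge J_{\mu_{k+1}} - 2\delta$; the identity itself relies on the aperiodicity secured by the Schweitzer transformation, which guarantees that $\lim_m \widetilde{\T}^m_{\mu_{k+1}} h_k$ is the genuine relative value function solving \Cref{BellmanEquation}. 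Combining with the one-step bound yields $b_{k+1} \le (1-\gamma) b_k + \epsilon + 2\delta$. Finally, setting $a_k := b_k + \epsilon$ turns this into the affine recursion $a_{k+1} \le (1-\gamma) a_k + (\epsilon(\gamma+1) + 2\delta)$ with $a_0 = J^* - \min_i(\T h_0 - h_0)(i) + \epsilon$, while the one-step bound gives $J^* - J_{\mu_{k+1}} \le (1-\gamma) b_k + \epsilon \le a_k$; unrolling the geometric recursion reproduces \eqref{eq:maintheorem} exactly.

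The hard part is conceptual rather than computational: finding the right potential $b_k$ and, above all, the weighted-average inequality that converts the uniform floor $\gamma$ on stationary probabilities into a true $(1-\gamma)$ contraction. This is the substitute for the missing $\infty$-norm contraction of the average-reward Bellman operator, and it is precisely why the mere span-contraction property is not enough. A secondary subtlety I would have to be careful about is that the Bellman identity for $h_{\mu_{k+1}}$ must hold for the limit of relative value iteration --- exactly what \Cref{assump3.1}(b) buys --- since without aperiodicity the iterates need not converge coordinatewise.
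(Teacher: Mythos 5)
Your proof is correct and is essentially the paper's own argument: your sandwich inequality is \Cref{lemma3.7}, your weighted-average step $\pi^\top g \ge \gamma \max_i g(i) + (1-\gamma)\min_i g(i)$ is exactly how the paper passes from $\P^*_{\mu_k} g_{k-1}$ to $(1-\gamma)l_{k-1}+\gamma u_{k-1}$ in \Cref{lemma3.9} (using \Cref{lemma3.3}), and your $\epsilon$-shifted recursion on $a_k = b_k+\epsilon$ is precisely the paper's final iteration. The only cosmetic difference is organizational: the paper runs a single recursion directly on $l_k=\min_i\pbr{\T h_k-h_k}(i)$ and concludes via $J^*-J_{\mu_{k+1}}\le J^*-l_k+\epsilon$, whereas you factor the same inequality into a one-step performance bound plus a potential recursion for $b_k$ --- an equivalent rearrangement.
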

\textbf{Interpretation of the Bound:} Since $0< \gamma < 1$, as $k\to\infty$, the error due to initial condition $h_0$ drops to zero. The approximation error consists of two components, $\epsilon$ and $\delta$. Here, $\epsilon$ represents the approximation error due to a suboptimal policy and $\delta$ represents the approximation error associated with evaluating the value function corresponding to a policy. The  limiting behavior of the average reward associated with the policies obtained as a consequence of the algorithm is captured in the following corollary. 
\begin{corollary}
The asymptotic performance of the policies obtained from Algorithm \ref{Alg: API} satisfies:
\begin{equation} \label{avgasymp}
   \limsup_{k\to\infty}{\pbr{ {J}^*- {J}_{\mu_{k}}}} \leq \frac{(1+\gamma)\epsilon+2\delta}{\gamma}.
\end{equation}
\end{corollary}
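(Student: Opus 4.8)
The plan is to obtain the corollary directly from the finite-time bound of \Cref{maintheorem} by letting $k\to\infty$; no new machinery is needed, since all of the substantive work is already contained in the theorem. First I would record the elementary fact that $0<\gamma\le 1$: the lower bound $\gamma>0$ is exactly \Cref{lemma3.3}, while $\gamma\le 1$ holds because $\gamma$ lower-bounds $\min_{i}\pi_\mu(i)$, the smallest entry of a probability vector. Consequently $(1-\gamma)^k\to 0$ as $k\to\infty$ (and indeed $(1-\gamma)^k=0$ for every $k\ge 1$ in the degenerate case $\gamma=1$).

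Next I would take $\limsup_{k\to\infty}$ of both sides of the bound \cref{eq:maintheorem}. The term labelled as initial condition error is the product of $(1-\gamma)^k$ with the quantity $\pbr{ {J}^*-\min_i\pbr{\T h_0-h_0}(i)+\epsilon}$, which is a fixed, $k$-independent constant determined solely by the algorithm input $h_0$, the scalar $\epsilon$, and the problem data (note $\T h_0$ is a fixed vector). Since this constant is finite and $(1-\gamma)^k\to 0$, the entire initial-condition term vanishes in the limit. Meanwhile the prefactor $\pbr{1-(1-\gamma)^k}/\gamma$ of the approximation error term increases monotonically to $1/\gamma$.

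Combining these two observations yields
\begin{equation*}
\limsup_{k\to\infty}\pbr{ {J}^*- {J}_{\mu_{k+1}}}\le \frac{1}{\gamma}\pbr{\epsilon(\gamma+1)+2\delta}=\frac{(1+\gamma)\epsilon+2\delta}{\gamma}.
\end{equation*}
Finally, because shifting the index by one leaves a limit superior unchanged, we have $\limsup_{k\to\infty}\pbr{ {J}^*- {J}_{\mu_{k+1}}}=\limsup_{k\to\infty}\pbr{ {J}^*- {J}_{\mu_{k}}}$, which gives exactly the asserted bound \cref{avgasymp}.

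I do not anticipate any real obstacle: the corollary is essentially a one-line limit computation on top of \Cref{maintheorem}. The only points that merit care are confirming that $0<\gamma\le 1$ so that $(1-\gamma)^k$ genuinely decays, and verifying that the bracketed initial-condition quantity is a bona fide finite constant independent of $k$ (so that multiplying it by a vanishing geometric factor sends it to zero). Both are immediate from the structure of Algorithm \ref{Alg: API} and \Cref{lemma3.3}.
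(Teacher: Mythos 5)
Your proof is correct and matches the paper's (implicit) argument: the corollary is obtained precisely by taking $\limsup_{k\to\infty}$ in the bound of \Cref{maintheorem}, noting that the $(1-\gamma)^k$ initial-condition term vanishes while the prefactor $\pbr{1-(1-\gamma)^k}/\gamma$ tends to $1/\gamma$, and then shifting the index from $\mu_{k+1}$ to $\mu_k$. The paper treats this as immediate from the theorem, and the two points you single out for care (that $0<\gamma\le 1$ so the geometric factor decays, and that the bracketed initial-condition quantity is a finite, $k$-independent constant) are exactly the right ones.
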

Note that the asymptotic bound in \cref{avgasymp} is in a very similar form as the asymptotic performance bound for the discounted reward MDP in \cref{apidisc} where $\gamma$ plays the role of $(1-\alpha)^2$. However, when $\epsilon$ and $\delta$ got to zero, this bound also goes to zero which is not the case if we try to approximate average-reward problems by using the results for discounted-reward problems in the limit where the horizon goes to infinity.

\section{Application to Reinforcement Learning}
Approximate policy iteration is closely related to RL. In general, approximate policy improvement and approximate policy evaluation in Algorithm~\ref{Alg: API} are executed through approximations to policy improvement (such as greedy update, mirror descent, softmax) and TD learning for value function approximation. First, we present a generic framework to analyze policy-based RL algorithms by building upon the theory presented in the last section. For this purpose, we define the state-action relative value function $Q$ (instead of the relative state value function $h$ as before) to evaluate a policy. Let $\Tq_\mu Q$ denote the Bellman operator with respect to $Q$ and policy $\mu$ where $(\Tq_\mu Q)(s,a) = r(s,a) + \pbr{ \Q_\mu Q}(s,a),$ where $ \Q_\mu(s',a'|s,a) = \mu(a'|s') \P(s'|s,a).$ 
The relative state action value function corresponding to policy $\mu$ is represented by $Q_\mu$ and is the solution to the following Bellman equation: $J_\mu+ Q_\mu(s,a) = r(s,a) + \sum_{s'\in\S, a'\in\A}\Q_\mu(s',a'|s,a)Q_\mu(s',a')$, for all state action pairs $(s,a)\in(\S,\A)$.
We present a generic policy-based algorithm for average-reward problem below.

\begin{algorithm}[H]
\caption{Generic Policy Based Algorithm: $Q$-function Average Reward}\label{Alg: API2}
\begin{algorithmic}
\State Require $Q_0 \in \R^{|\S||\A|}$ 
\For{$k = 0,1,2,\ldots, T$} 
\State 1. Determine $\mu_{k+1}\! \in \!\Pi$ as a function of $Q_k$ using a possibly random policy improvement step 
\State 2. Compute $Q_{k+1}$ as an approximation to $Q_{\mu_{k+1}}$ using (state, action, reward) samples from a trajectory generated by policy $\mu_{k+1}$ 
\EndFor
\end{algorithmic}
\end{algorithm}
Note that the error in Steps 1 (policy improvement) and 2 (policy evaluation) of the above algorithm could be random. Thus, the analysis for \cref{maintheorem} has to be adapted to Algorithm~\ref{Alg: API2}.  The resulting expected deviation from optimality is characterized in the lemma below.

\begin{lemma} \label{lem: QLem}
 Let $\mu_k$ and $Q_k$ be the sequence of policies and relative state-action value function iterates generated by Algorithm \ref{Alg: API2}. For all $k\in 0,\ldots,T$, we have: 
    \begin{align*}
        \!\!\Big( {J}^* -  \pbr{\min_{(s,a)} \pbr{\Tq Q_k - Q_k}(s,a)} \!\Big) & \leq \pbr{1-\gamma}\pbr{ {J}^*-\pbr{\min_{(s,a)} \pbr{\Tq Q_{k-1} - Q_{k-1}}(s,a)}} \nonumber \\ 
         & + \left\|\Tq Q_{k-\!1}\!-\!\Tq_{\mu_{k}}Q_{k-\!1}\! \right\|_\infty \!\!\!+\!2\!\left\|Q_{k}\!-\!Q_{\mu_{k}}\!\right\|_\infty    \! ,    
    \end{align*}
 where
 \begin{equation*}
     \gamma = \min_{\substack{\mu\in{\mu_1,\mu_2,\ldots,\mu_{T}} \\ (s,a)\in\S\times\A}}  \Q_\mu^*(s,a) > 0,
 \end{equation*}
 and $ \Q_\mu^*$ is a matrix with identical rows corresponding to the invariant distribution of $ \Q_\mu$. 
\end{lemma}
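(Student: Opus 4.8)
The plan is to establish the stated one-step recursion by combining two pieces: an \emph{evaluation-error} bound that relates $g_k := \min_{(s,a)}(\Tq Q_k - Q_k)(s,a)$ to the true average reward $J_{\mu_k}$, and an \emph{improvement-plus-contraction} bound that relates $J_{\mu_k}$ back to $g_{k-1}$. Throughout, let $\nu_k$ denote the row vector common to all rows of $\Q_{\mu_k}^*$, i.e. the invariant distribution of $\Q_{\mu_k}$. The two facts I would set up first are the stationarity $\nu_k^\top \Q_{\mu_k} = \nu_k^\top$ and the Bellman equation $\Tq_{\mu_k} Q_{\mu_k} = J_{\mu_k}\onevec + Q_{\mu_k}$; together these give the basic identity $\nu_k^\top(\Tq_{\mu_k}Q - Q) = J_{\mu_k}$ for \emph{every} $Q$, which I will use twice.

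First I would handle the evaluation error. Since $\Tq = \max_\mu \Tq_\mu$ pointwise, $\Tq Q_k - Q_k \ge \Tq_{\mu_k}Q_k - Q_k$; expanding $\Tq_{\mu_k}Q_k$ around $Q_{\mu_k}$ and using $\Tq_{\mu_k}Q_{\mu_k}-Q_{\mu_k}=J_{\mu_k}\onevec$ gives $\Tq_{\mu_k}Q_k - Q_k = J_{\mu_k}\onevec + (\Q_{\mu_k}-I)(Q_k - Q_{\mu_k})$. As $\Q_{\mu_k}$ is stochastic, $\|\Q_{\mu_k}-I\|_\infty\le 2$, so minimizing over $(s,a)$ yields $g_k \ge J_{\mu_k} - 2\|Q_k - Q_{\mu_k}\|_\infty$. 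This produces the $2\|Q_k-Q_{\mu_k}\|_\infty$ term and reduces the lemma to proving $J^* - J_{\mu_k}\le (1-\gamma)(J^*-g_{k-1}) + \|\Tq Q_{k-1} - \Tq_{\mu_k}Q_{k-1}\|_\infty$.

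For that bound I would apply the identity with $Q=Q_{k-1}$ to write $J^* - J_{\mu_k} = \nu_k^\top\big[J^*\onevec - (\Tq_{\mu_k}Q_{k-1}-Q_{k-1})\big]$. The approximate-greedy guarantee gives $\Tq_{\mu_k}Q_{k-1}\ge \Tq Q_{k-1} - \|\Tq Q_{k-1}-\Tq_{\mu_k}Q_{k-1}\|_\infty\,\onevec$, hence $J^*-J_{\mu_k} \le J^* + \|\Tq Q_{k-1}-\Tq_{\mu_k}Q_{k-1}\|_\infty - \nu_k^\top(\Tq Q_{k-1}-Q_{k-1})$. The crux is to lower bound $\nu_k^\top(\Tq Q_{k-1}-Q_{k-1})$ by strictly more than $g_{k-1}$. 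Here I invoke the uniform lower bound $\nu_k \ge \gamma\onevec$ coming from the $Q$-analog of \Cref{lemma3.3}: for any vector $v$ one has the elementary inequality $\nu_k^\top v \ge \gamma\max_i v_i + (1-\gamma)\min_i v_i$ (write $v_i = \min v + (v_i-\min v)$ and keep only the maximizing coordinate). Applying this to $v = \Tq Q_{k-1}-Q_{k-1}$ and using that $\max_{(s,a)}(\Tq Q - Q)(s,a)\ge J^*$ for every $Q$ (which follows from the same identity applied to $\mu^*$ together with $\Tq \ge \Tq_{\mu^*}$) gives $\nu_k^\top(\Tq Q_{k-1}-Q_{k-1}) \ge \gamma J^* + (1-\gamma)g_{k-1}$. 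Substituting yields exactly $J^*-J_{\mu_k}\le (1-\gamma)(J^*-g_{k-1}) + \|\Tq Q_{k-1}-\Tq_{\mu_k}Q_{k-1}\|_\infty$, and combining with the evaluation-error bound closes the recursion.

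The main obstacle is precisely this contraction step: a naive chain that only uses $\nu_k^\top v \ge \min v$ produces a coefficient of $1$ instead of $1-\gamma$, which is the very degeneracy responsible for the blow-up of the discounted-reward bound as $\alpha\to 1$. The essential ingredient---the van der Wal idea---is to exploit the \emph{maximum} of the Bellman residual (an upper bound on $J^*$) together with the uniform minorization $\nu_k\ge\gamma\onevec$ to split $\nu_k^\top v$ into a $\gamma$-fraction pinned at $J^*$ and a $(1-\gamma)$-fraction at $g_{k-1}$. I would therefore isolate and prove the two general facts---$\max_{(s,a)}(\Tq Q-Q)\ge J^*$ and the convex-combination lower bound---as standalone claims before assembling the recursion; verifying $\nu_k\ge\gamma\onevec$ for the randomized greedy policies is the one place where I would lean directly on the stated extension of \Cref{lemma3.3} to randomized policies.
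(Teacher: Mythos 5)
Your proof is correct and is essentially the paper's own argument: the same three ingredients --- the $2\|Q_k-Q_{\mu_k}\|_\infty$ evaluation bound via shift-invariance of $\Tq_{\mu_k}$, the $\epsilon$-greedy replacement of $\Tq_{\mu_k}Q_{k-1}$ by $\Tq Q_{k-1}$, and the van der Wal step combining the minorization $\nu_k\geq\gamma\onevec$ with $\max_{(s,a)}\pbr{\Tq Q_{k-1}-Q_{k-1}}(s,a)\geq J^*$ --- appear in the same order, since the paper's intermediate quantity $\Q^*_{\mu_k}\pbr{\Tq_{\mu_k}Q_{k-1}-Q_{k-1}}$ is exactly your $J_{\mu_k}\onevec$. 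The only real difference is presentational: you invoke the Bellman equation for $Q_{\mu_k}$ directly (as licensed by the definition in Section 4), whereas the paper, working from the algorithmic definition $Q_{\mu_k}=\lim_{m\to\infty}\pbr{\widetilde{\T}^{\mathsf{Q}}_{\mu_k}}^{m}Q_{k-1}$, spends most of its proof re-deriving that identity from the limit representation.
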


Iteratively applying \cref{lem: QLem} yields the following proposition for a generic policy-based algorithm.
\begin{proposition} \label{prop: P42}
For any $T>0$, the iterates of Algorithm \ref{Alg: API2} satisfy
\alns{
\mathbb{E}\big[  {J}^* \!-\!  {J}_{\mu_{T+1}}\big] \!&\leq\! \underbrace{\pbr{1\!-\!\gamma}^T \mathbb{E}\!\sbr{ {J}^* \!-\! \min_{i} \pbr{\Tq Q_0 \!-\! Q_0}(i)}}_{\text{Initial condition error}} \!+\! \underbrace{2 \sum_{\ell  = 0}^{T-1} \pbr{1\!-\!\gamma}^{\ell} \mathbb{E}\!\sbr{\left\| Q_{T-\ell} \!-\! Q_{\mu_{T-\ell}} \right\|_{\infty}}}_{\text{Policy evaluation error}} \nonumber \\
&  \!+\! \underbrace{\!\sum_{\ell = 1}^{T-1}\! \pbr{1-\gamma}^{\ell-1} \mathbb{E}\!\sbr{ \left\| \Tq_{\mu_{T+1-\ell}}\! Q_{T-\ell} \!-\! \Tq Q_{T-\ell}  \right\|_{\infty}}+\E\sbr{{ \left\| \Tq_{\mu_{T+1}} Q_{T} - \Tq Q_{T}\right\|_{\infty}}}}_{\text{Policy improvement error}}.
}
\end{proposition}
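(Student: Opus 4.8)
The plan is to treat \Cref{lem: QLem} as a one-step affine recursion for a scalar sequence, unroll it geometrically, and then bridge the resulting Bellman-residual quantity to the true performance gap $J^* - J_{\mu_{T+1}}$.

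First I would set $e_k := J^* - \min_{(s,a)}\pbr{\Tq Q_k - Q_k}(s,a)$ and read \Cref{lem: QLem} as
\[
e_k \leq (1-\gamma)\,e_{k-1} + \left\|\Tq Q_{k-1} - \Tq_{\mu_k}Q_{k-1}\right\|_\infty + 2\left\|Q_k - Q_{\mu_k}\right\|_\infty .
\]
Before unrolling I would argue that $\gamma$ may be taken as a single deterministic constant: the state-action analogue of \Cref{lemma3.3} gives a uniform lower bound $\Q_\mu^*(s,a)\geq\gamma>0$ over \emph{all} policies, so the trajectory-dependent minimum in \Cref{lem: QLem} is bounded below by this constant; since $e_{k-1}\geq 0$ (from the optimality sandwich $\min_{(s,a)}(\Tq Q - Q)\le J^*$), replacing the random factor by the deterministic $\gamma$ only loosens the inequality. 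Unrolling from $k=T$ down to $k=1$ then yields
\[
e_T \leq (1-\gamma)^T e_0 + \sum_{\ell=0}^{T-1}(1-\gamma)^\ell\left(\left\|\Tq Q_{T-\ell-1} - \Tq_{\mu_{T-\ell}}Q_{T-\ell-1}\right\|_\infty + 2\left\|Q_{T-\ell}-Q_{\mu_{T-\ell}}\right\|_\infty\right),
\]
and a shift $\ell\mapsto\ell+1$ recasts the improvement terms in the form $\|\Tq_{\mu_{T+1-\ell}}Q_{T-\ell}-\Tq Q_{T-\ell}\|_\infty$ of the statement.

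The key bridging step relates $e_T$ to $J^*-J_{\mu_{T+1}}$, and it rests on the average-reward sandwich bound: for any policy $\mu$ and any $Q$,
\[
\min_{(s,a)}\pbr{\Tq_\mu Q - Q}(s,a)\leq J_\mu \leq \max_{(s,a)}\pbr{\Tq_\mu Q - Q}(s,a).
\]
I would establish this by left-multiplying the identity $\Tq_\mu Q - Q = r + \pbr{\Q_\mu - I}Q$ by the invariant distribution $\nu_\mu$ of $\Q_\mu$ (the common row of $\Q_\mu^*$) and using $\nu_\mu^\top\Q_\mu = \nu_\mu^\top$ together with $\nu_\mu^\top r = J_\mu$, which displays $J_\mu$ as a convex combination of the entries of $\Tq_\mu Q - Q$. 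Taking the lower bound with $\mu=\mu_{T+1}$, $Q=Q_T$, and combining it with the componentwise inequality $\Tq_{\mu_{T+1}}Q_T \geq \Tq Q_T - \|\Tq Q_T - \Tq_{\mu_{T+1}}Q_T\|_\infty\onevec$ (valid since $\Tq_{\mu_{T+1}}Q_T\le\Tq Q_T$), I obtain
\[
J^* - J_{\mu_{T+1}} \leq e_T + \left\|\Tq Q_T - \Tq_{\mu_{T+1}}Q_T\right\|_\infty ,
\]
which supplies the lone undiscounted policy-improvement term.

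Finally I would substitute the unrolled bound for $e_T$, take expectations term by term (legitimate now that $\gamma$ is deterministic), and collect the pieces into the three labeled groups. I expect the bridging step to be the main obstacle: because the average-reward Bellman operator lacks the $\infty$-norm contraction available in the discounted case, $e_T$ is only a Bellman-residual surrogate, and it is precisely the irreducibility/positivity of $\nu_\mu$ guaranteed by \Cref{lemma3.3} that turns this surrogate into a genuine bound on $J^* - J_{\mu_{T+1}}$. A secondary care point is the deterministic treatment of $\gamma$, which must be justified before the geometric weights can be pulled through the expectation.
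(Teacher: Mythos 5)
You take essentially the same route as the paper's own proof: unroll the recursion of \Cref{lem: QLem} geometrically, then pass from the residual quantity $e_T = J^* - \min_{(s,a)}\pbr{\Tq Q_T - Q_T}(s,a)$ to $J^*-J_{\mu_{T+1}}$ via the sandwich $J_{\mu_{T+1}} \geq \min_{(s,a)}\pbr{\Tq Q_T - Q_T}(s,a) - \left\|\Tq Q_T - \Tq_{\mu_{T+1}}Q_T\right\|_\infty$, proved by multiplying by the invariant distribution of $\Q_{\mu_{T+1}}$. This is exactly the ``part 1'' inequality of the paper's appendix and its use; the only cosmetic difference is that the paper unrolls the shifted sequence $a_k = J^* - l_k + \epsilon_k$ rather than $e_k$ itself, which is algebraically equivalent.

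Two genuine issues remain. First, an off-by-one in the improvement sum: your unrolled bound contains the term $(1-\gamma)^{T-1}\,\E\sbr{\left\|\Tq Q_0 - \Tq_{\mu_1}Q_0\right\|_\infty}$ (the $\ell = T$ term after your index shift), which is absent from \Cref{prop: P42}, whose sum stops at $\ell = T-1$. The paper eliminates it by declaring the first improvement error zero (``$\epsilon_0 = 0$'', stated only inside the appendix proof). This is not removable slack: take $Q_0 = Q^*$ (so the initial-condition term vanishes), exact evaluation $Q_1 = Q_{\mu_1}$, and $\mu_2$ (soft-)greedy with respect to $Q_1$; the proposition as literally stated would then force $J_{\mu_2}\approx J^*$ for an \emph{arbitrary} $\mu_1$, i.e., one-step convergence of exact policy iteration, which is false --- the $\epsilon_0$ term is precisely what prevents this. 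So you must either invoke the paper's convention or keep the extra term; as written, your argument proves a correct bound that is not literally the stated one. Second, your justification for a deterministic $\gamma$ rests on a false claim: there is no state-action analogue of \Cref{lemma3.3} holding over \emph{all} policies, because the invariant distribution of $\Q_\mu$ is $\nu_\mu(s,a) = \pi_\mu(s)\mu(a|s)$, which vanishes at every action the policy never takes (in particular for deterministic policies). A uniform positive lower bound exists only over sufficiently exploratory policy classes --- which is exactly why \Cref{lem: QLem} defines $\gamma$ as a minimum over the realized trajectory $\mu_1,\ldots,\mu_T$, a random quantity. Your instinct that something must be said before pulling $(1-\gamma)^\ell$ outside the expectation is sound (the paper is silent on this point), but the fix is to restrict to policy classes whose action probabilities admit a deterministic lower bound (as in the paper's corollaries), not to appeal to an all-policies bound. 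Your auxiliary observation that $e_{k-1}\geq 0$, via an exactly greedy policy, is correct, and it is what makes replacing a larger random $\gamma$ by a smaller deterministic one legitimate once such a deterministic bound is available.
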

We now illustrate the applicability of the above result to specific RL algorithms. More specifically, we characterize finite time performance bounds when state-action relative value functions are learnt through TD-Learning~\cite{zhang2021finite}, while several different update rules are employed for policy improvement. 

\subsection{Performance bounds for RL algorithms}
For every iteration of \cref{Alg: API2}, it is necessary to observe trajectories of length $\tau$ in order to learn the state action value function associated with any policy. Using linear function approximations for value functions, the error in policy evaluation at iteration $k$ in \cref{Alg: API2} can be expressed in the following form:
\begin{equation}
    \|Q_k-Q_{\mu_k}\|_\infty \leq \|\Phi(\theta_k-\theta^*_{\mu_k})\|_\infty + \|Q_{\mu_k}-\Phi\theta^*_{\mu_k}\|_\infty, 
    \label{tderr}
\end{equation}
where $\Phi\in\R^{|\S||\A|\times d}$ corresponds to the feature vector matrix and $\theta\in\R^d$ is the parameter vector. Further, $\theta_k$ is the parameter vector obtained as a result of the TD learning algorithm. The projection of $Q_{\mu_k}$ onto the span of $\Phi$ is given by $\Phi\theta^*_{\mu_k}$. The last term in \cref{tderr} corresponds to the function approximation error which depends on the span of the feature matrix $\Phi$. We denote this error term by $\delta_{0,k}.$ The error due to TD learning (first term in \cref{tderr}) is denoted by $\delta_{\text{TD},k}$. Assuming $\max_{(s,a)\in\S\times\A}\|\phi(s,a)\|_2\leq 1$, and $\|\theta_{\mu_k}^*\|_2$ is bounded uniformly in $k$, \cite{zhang2021finite} shows that there exists $C>0$ such that the length of the sample trajectory $\tau$ and $\delta_{\text{TD},k}$ are related as follows:
\begin{equation}
    \E\left[\delta_{\text{TD},k}\right] = \frac{C}{\sqrt{\tau}}.
\end{equation}
Let $c_0 = \pbr{J^* - \min_{i\in\S}\pbr{\Tq Q_0 - Q_0}(i)}$ be the error due to initial condition and $\overline{\delta}_{0}=\max_t \delta_{0,t}$ be the maximum function approximation error across all iterations. Then we obtain the following performance bounds when TD learning is used for policy evaluation.

\begin{corollary}{(Greedy policy update)}
Let $\Tilde{a}(s) = \argmax_{a\in\A}Q_k(s,a').$ Let $\beta>0$ be such that the sequence of policies in \cref{Alg: API2} corresponding to all state action pairs $(s,a)$ are obtained using the following algorithm, which we call the greedy policy update.
\begin{align}\label{eq:greedy_update}
     \mu_{k+1}(a|s)= 
\begin{cases}
    \frac{1}{\beta |\A|},& \hspace{-2mm}\text{if } a\neq \Tilde{a}(s)\\
    \frac{1}{\beta |\A|}+1-\frac{1}{\beta},              &\hspace{-2mm} \text{if } a= \Tilde{a}(s),
\end{cases}
    \end{align}
Let $\eta = \max_{\substack{s\in\S,a\in\A \\ k\in 1\ldots T}}|Q_k(s,a)|.$ Using TD Learning with linear value function approximation for policy evaluation, we get the following finite time performance bound:
\begin{align}
    \E\sbr{J^*-J_{\mu_{T+1}}} & \leq (1-\gamma)^T c_0 + \pbr{\frac{1+\gamma}{\gamma}}\frac{2\eta}{\beta} + \frac{2}{\gamma}\pbr{\overline{\delta}_0+\frac{C}{\sqrt{\tau}}}.
\end{align}
\end{corollary}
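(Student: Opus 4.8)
The plan is to invoke \Cref{prop: P42} directly and then bound its three groups of terms---initial condition, policy evaluation, and policy improvement---separately for the greedy update and TD policy evaluation. The initial-condition term is already in the desired form: it equals $(1-\gamma)^T c_0$ by the definition of $c_0$. So the work reduces to (i) controlling the per-iteration policy-improvement error $\|\Tq Q_k - \Tq_{\mu_{k+1}}Q_k\|_\infty$ induced by the greedy rule \Cref{eq:greedy_update}, and (ii) controlling the per-iteration evaluation error $\|Q_k - Q_{\mu_k}\|_\infty$ via the TD decomposition \Cref{tderr}, and then collapsing the resulting geometric sums.

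The core of the argument, and the step I expect to require the most care, is bounding the policy-improvement error for the greedy update. Fixing $(s,a)$ and writing $(\Tq Q_k)(s,a) = r(s,a) + \sum_{s'}\P(s'|s,a)\max_{a'}Q_k(s',a')$ while $(\Tq_{\mu_{k+1}}Q_k)(s,a) = r(s,a) + \sum_{s'}\P(s'|s,a)\sum_{a'}\mu_{k+1}(a'|s')Q_k(s',a')$, the rewards cancel and the difference is a $\P(\cdot|s,a)$-average of the per-state gaps $\max_{a'}Q_k(s',a') - \sum_{a'}\mu_{k+1}(a'|s')Q_k(s',a')$. Substituting the greedy weights from \Cref{eq:greedy_update} and using $Q_k(s',\tilde a(s')) = \max_{a'}Q_k(s',a')$, each such gap collapses to $\tfrac{1}{\beta}\big(\max_{a'}Q_k(s',a') - \tfrac{1}{|\A|}\sum_{a'}Q_k(s',a')\big)$, i.e.\ $1/\beta$ times the gap between the maximum and the average of $Q_k(s',\cdot)$. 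Since this quantity is nonnegative and at most $\max_{a'}Q_k(s',a') - \min_{a'}Q_k(s',a') \le 2\eta$ with $\eta = \max_{s,a,k}|Q_k(s,a)|$, and an average over $\P(\cdot|s,a)$ does not increase the bound, I obtain the uniform estimate $\|\Tq Q_k - \Tq_{\mu_{k+1}}Q_k\|_\infty \le 2\eta/\beta$ for every iteration $k$.

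It remains to plug these estimates into \Cref{prop: P42}. For the evaluation term I would use \Cref{tderr} to write $\|Q_k - Q_{\mu_k}\|_\infty \le \delta_{\mathrm{TD},k} + \delta_{0,k}$, take expectations using $\E[\delta_{\mathrm{TD},k}] = C/\sqrt{\tau}$ and $\delta_{0,k}\le \overline{\delta}_0$, so each summand is at most $\overline{\delta}_0 + C/\sqrt{\tau}$; bounding $2\sum_{\ell=0}^{T-1}(1-\gamma)^\ell$ by the geometric series $\sum_{\ell\ge 0}(1-\gamma)^\ell = 1/\gamma$ yields the contribution $\tfrac{2}{\gamma}(\overline{\delta}_0 + C/\sqrt{\tau})$. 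For the improvement term, substituting the uniform bound $2\eta/\beta$ into both pieces gives $\tfrac{2\eta}{\beta}\big(\sum_{\ell=1}^{T-1}(1-\gamma)^{\ell-1} + 1\big)$, which after re-indexing $j=\ell-1$ is at most $\tfrac{2\eta}{\beta}\big(\tfrac{1}{\gamma} + 1\big) = \big(\tfrac{1+\gamma}{\gamma}\big)\tfrac{2\eta}{\beta}$. Adding the three contributions reproduces the claimed bound exactly. The only genuinely algorithm-specific computation is the collapse of the greedy gap to $\tfrac{1}{\beta}(\max - \text{average})$; everything else is bookkeeping with geometric sums and the expectation bounds already established for average-reward TD learning.
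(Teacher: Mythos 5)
Your proposal is correct and follows essentially the same route as the paper's proof: invoke \Cref{prop: P42}, bound the greedy policy-improvement error by $2\eta/\beta$, bound the expected evaluation error by $\overline{\delta}_0 + C/\sqrt{\tau}$ via \Cref{tderr}, and collapse the geometric sums into $\frac{2}{\gamma}$ and $\frac{1+\gamma}{\gamma}$ factors. The only cosmetic difference is that you derive the $2\eta/\beta$ bound from scratch (collapsing the gap to $\tfrac{1}{\beta}(\max - \text{average})$ of $Q_k(s',\cdot)$), whereas the paper states the same estimate directly, crediting the policy-improvement error computation to prior work.
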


\begin{corollary}{(Softmax policy update)}
Let $\beta>0$ be such that the sequences of policies in \cref{Alg: API2} corresponding to all state action pairs $(s,a)$ are obtained using the following algorithm, which we call the softmax policy update.
\begin{equation}
    \mu_{k+1}(a|s) = \frac{\exp\pbr{\beta Q_k(s,a)}}{\sum_{a'\in\A}\exp\pbr{\beta Q_k(s,a')}}.
\end{equation}
Using TD Learning with linear value function approximation for policy evaluation, we get the following finite time performance bound:
\begin{align}
    \E\sbr{J^*-J_{\mu_{T+1}}} & \leq (1-\gamma)^T c_0 + \pbr{\frac{1+\gamma}{\gamma}}\frac{\log\pbr{|\A|}}{\beta} +\frac{2}{\gamma}\pbr{\overline{\delta}_0+\frac{C}{\sqrt{\tau}}}.
\end{align}
\end{corollary}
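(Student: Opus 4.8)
The plan is to specialize \Cref{prop: P42} to the softmax update and bound its three terms one at a time. The initial-condition term $\pbr{1-\gamma}^T\Eb{J^* - \min_i\pbr{\Tq Q_0 - Q_0}(i)}$ equals $(1-\gamma)^T c_0$ by the definition of $c_0$, so it requires no work. For the policy-evaluation term, I would feed the TD decomposition \cref{tderr} into the sum: for every iteration $\Eb{\|Q_{T-\ell} - Q_{\mu_{T-\ell}}\|_\infty}\le \overline{\delta}_0 + C/\sqrt{\tau}$, using $\Eb{\delta_{\text{TD},k}} = C/\sqrt{\tau}$ and $\delta_{0,k}\le\overline{\delta}_0$. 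Since $\sum_{\ell=0}^{T-1}(1-\gamma)^\ell \le 1/\gamma$, the factor $2\sum_{\ell=0}^{T-1}(1-\gamma)^\ell\Eb{\|Q_{T-\ell}-Q_{\mu_{T-\ell}}\|_\infty}$ collapses to $\tfrac{2}{\gamma}\pbr{\overline{\delta}_0 + C/\sqrt{\tau}}$, which is exactly the last summand in the claimed bound.

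The crux is the policy-improvement term, where I must control $\|\Tq_{\mu_{k+1}}Q_k - \Tq Q_k\|_\infty$ for the softmax policy. Expanding both operators, for every $(s,a)$,
\begin{equation*}
\pbr{\Tq Q_k - \Tq_{\mu_{k+1}}Q_k}(s,a) = \sum_{s'}\P(s'|s,a)\sbr{\max_{a'}Q_k(s',a') - \sum_{a'}\mu_{k+1}(a'|s')Q_k(s',a')},
\end{equation*}
so the task reduces, state by state, to bounding the gap between the maximum of $Q_k(s',\cdot)$ and its softmax-weighted average. The key inequality I would establish is that for any $q\in\R^{|\A|}$ with softmax weights $p_a\propto e^{\beta q_a}$,
\begin{equation*}
\max_a q_a - \sum_a p_a q_a \le \frac{\log|\A|}{\beta}.
\end{equation*}
I would prove this from the log-sum-exp identity $\sum_a p_a q_a = \tfrac{1}{\beta}\log\sum_a e^{\beta q_a} - \tfrac{1}{\beta}H(p)$, where $H(p) = -\sum_a p_a\log p_a$, together with $\max_a q_a \le \tfrac{1}{\beta}\log\sum_a e^{\beta q_a}$ and the entropy bound $H(p)\le\log|\A|$. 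I expect this softmax-gap estimate to be the main obstacle, as it is the only place the specific update enters; everything else is bookkeeping.

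Granting the gap bound, each bracket above lies in $[0,\log|\A|/\beta]$, and averaging against the probability vector $\P(\cdot|s,a)$ gives $\|\Tq_{\mu_{k+1}}Q_k - \Tq Q_k\|_\infty \le \log|\A|/\beta$ deterministically, hence also in expectation. Substituting this uniform bound into the policy-improvement term of \Cref{prop: P42} and summing the geometric series yields
\begin{equation*}
\frac{\log|\A|}{\beta}\pbr{\sum_{\ell=1}^{T-1}(1-\gamma)^{\ell-1} + 1} \le \frac{\log|\A|}{\beta}\pbr{\frac{1}{\gamma}+1} = \pbr{\frac{1+\gamma}{\gamma}}\frac{\log|\A|}{\beta}.
\end{equation*}
Adding the three pieces reproduces the stated inequality. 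The two points I would check carefully are that the softmax in the update is computed from the same iterate $Q_k$ that appears inside the Bellman operators, so the gap bound transfers without slack, and that the randomness in $Q_k$ and $\mu_{k+1}$ is handled by taking expectations of the already-deterministic bounds, so no tower-property subtlety remains for the improvement term.
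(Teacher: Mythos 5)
Your proposal is correct and follows essentially the same route as the paper's proof: specialize \cref{prop: P42}, identify the initial-condition term with $(1-\gamma)^T c_0$, collapse the policy-evaluation sum to $\tfrac{2}{\gamma}\pbr{\overline{\delta}_0+C/\sqrt{\tau}}$ via the TD decomposition and the geometric series, and bound the policy-improvement term uniformly by $\log\pbr{|\A|}/\beta$ before summing. The only difference is that the paper imports the softmax gap estimate $\|\Tq Q_k - \Tq_{\mu_{k+1}}Q_k\|_\infty \leq \log\pbr{|\A|}/\beta$ from \cite{chen2022sample}, whereas you prove it directly via the log-sum-exp identity and the entropy bound $H(p)\leq\log|\A|$ --- a correct, self-contained substitute for that citation.
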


\begin{corollary}{(Mirror descent update)}\label{corollary:mdm}
Let $\beta>0$ be such that the sequences of policies in \cref{Alg: API2} corresponding to all state action pairs $(s,a)$ are obtained using the following algorithm, which we call the mirror descent policy update.
\begin{equation}\label{eq:mdm}
    \mu_{k+1}(a|s) \propto \mu_k(a|s)\exp\{\beta Q_k(s,a)\}.
\end{equation}
Let $\omega = \min_{\substack{k\in 1\ldots T\\ s\in\S}} \mu_k(a^*(s)|s)$ where $a^*(s)$ is the optimal action at state $s.$
Using TD Learning with linear value function approximation for policy evaluation, we get the following finite time performance bound:
\begin{align}
    \E\sbr{J^*-J_{\mu_{T+1}}} & \leq (1-\gamma)^T c_0 + \pbr{\frac{1+\gamma}{\gamma\beta}}\log\pbr{\frac{1}{\omega}} +\frac{2}{\gamma}\pbr{\overline{\delta}_0+\frac{C}{\sqrt{\tau}}}.
\end{align}
\end{corollary}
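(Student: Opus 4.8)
The plan is to specialize \Cref{prop: P42} to the mirror-descent update \eqref{eq:mdm} and bound each of its three terms in turn. The initial-condition term is immediate: by the definition of $c_0$, the first term of \Cref{prop: P42} is exactly $(1-\gamma)^T c_0$. For the policy-evaluation term I would invoke the TD error decomposition \eqref{tderr}, $\|Q_k-Q_{\mu_k}\|_\infty \le \delta_{\text{TD},k}+\delta_{0,k}$, take expectations using $\E\sbr{\delta_{\text{TD},k}}=C/\sqrt{\tau}$ together with $\delta_{0,k}\le\overline{\delta}_0$, and then sum the geometric series $2\sum_{\ell=0}^{T-1}(1-\gamma)^\ell \le 2/\gamma$ to obtain the contribution $\tfrac{2}{\gamma}\pbr{\overline{\delta}_0+C/\sqrt{\tau}}$. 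These two terms are shared verbatim with the softmax and greedy corollaries, so the only update-specific work lies in the policy-improvement term.

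The substantive step is bounding the per-iteration policy-improvement error $\|\Tq Q_k-\Tq_{\mu_{k+1}}Q_k\|_\infty$ for the exponential-weights iterate \eqref{eq:mdm}. Writing out both operators and using that $\P(\cdot|s,a)$ is a probability vector, this error is at most $\max_s\pbr{\max_a Q_k(s,a)-\sum_a \mu_{k+1}(a|s)Q_k(s,a)}$, i.e. the worst-case suboptimality gap of $\mu_{k+1}$ measured against $Q_k$. I would control this gap by the standard mirror-descent / multiplicative-weights argument. Let $Z_k(s)$ be the normalizer of \eqref{eq:mdm} and let $\bar a$ achieve $\max_a Q_k(s,a)$. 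Telescoping the relative entropies across the update gives $D(\delta_{\bar a}\|\mu_k)-D(\delta_{\bar a}\|\mu_{k+1})=\beta\max_a Q_k(s,a)-\log Z_k(s)$, so by nonnegativity of the second KL term, $\beta\max_a Q_k(s,a)-\log Z_k(s)\le \log\pbr{1/\mu_k(\bar a|s)}$. Combining this with the convexity bound $\log Z_k(s)\le \beta\sum_a\mu_{k+1}(a|s)Q_k(s,a)$ (which follows from $\log Z_k(s)=f(\beta)\le\beta f'(\beta)$ for the convex $f(\beta)=\log\sum_a\mu_k(a|s)e^{\beta Q_k(s,a)}$ with $f(0)=0$) yields the gap bound $\tfrac{1}{\beta}\log\pbr{1/\mu_k(\bar a|s)}$. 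Taking the minimum probability $\omega$ assigned to the relevant action across all $s$ and all $k\le T$ produces the uniform per-step bound $\|\Tq Q_k-\Tq_{\mu_{k+1}}Q_k\|_\infty\le \tfrac{1}{\beta}\log\pbr{1/\omega}$.

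Finally I would substitute this uniform bound into the policy-improvement term of \Cref{prop: P42} and collect the geometric factors: $\tfrac{1}{\beta}\log\pbr{1/\omega}\pbr{\sum_{\ell=1}^{T-1}(1-\gamma)^{\ell-1}+1}=\tfrac{1}{\beta}\log\pbr{1/\omega}\cdot\tfrac{1-(1-\gamma)^{T-1}+\gamma}{\gamma}\le \tfrac{1+\gamma}{\gamma\beta}\log\pbr{1/\omega}$, where the last inequality uses $(1-\gamma)^{T-1}\ge 0$. Adding the three contributions reproduces the claimed bound. The main obstacle is precisely the per-step improvement estimate: one must correctly convert the suboptimality gap of the exponential-weights iterate into a KL term and then argue that the probability of the (iteration-dependent) greedy/optimal action is bounded below by $\omega$ uniformly in $k$ and $s$. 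This is where the mirror-descent structure, rather than the generic \Cref{prop: P42}, does the work; the remaining steps are routine geometric-series and definitional bookkeeping, mirroring the softmax and greedy corollaries in which $\tfrac{1}{\beta}\log\pbr{1/\omega}$ is replaced by $\tfrac{\log|\A|}{\beta}$ and $\tfrac{2\eta}{\beta}$ respectively.
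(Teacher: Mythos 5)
Your proof is correct and follows the same skeleton as the paper's: specialize \cref{prop: P42}, bound the initial-condition term by $(1-\gamma)^T c_0$, bound the evaluation term via \cref{tderr} and the geometric series $2\sum_{\ell=0}^{T-1}(1-\gamma)^\ell \leq 2/\gamma$, and bound the improvement term by $\pbr{\tfrac{1+\gamma}{\gamma\beta}}\log\pbr{1/\omega}$. The one genuine difference lies in the update-specific step: the paper does not derive the per-iteration mirror-descent improvement bound at all, but imports $\epsilon_k \leq \tfrac{1}{\beta}\log\pbr{1/\min_{s}\mu_{k+1}(a^*(s)|s)}$ from \cite{chen2022sample} (stated in terms of the \emph{new} policy $\mu_{k+1}$, and with a stray factor of $2$ inside the substituted sum that silently disappears in the final bound), whereas you prove it from scratch via the KL-telescoping and convexity argument, obtaining $\epsilon_k \leq \tfrac{1}{\beta}\log\pbr{1/\mu_k(\bar a(s)|s)}$ in terms of the \emph{old} policy $\mu_k$ and the greedy action $\bar a(s) = \argmax_{a}Q_k(s,a)$. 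Your version is in fact the one that matches the corollary's definition of $\omega$ (a minimum over $\mu_k$ for $k\in 1,\ldots,T$), and your reading of $a^*(s)$ as the iteration-dependent greedy action is the careful one: your own derivation gives the gap bound $\tfrac{1}{\beta}\log\tfrac{\mu_{k+1}(\bar a|s)}{\mu_k(\bar a|s)}$, and since the exponential-weights update increases the probability of $\bar a$, replacing $\mu_k$ by $\mu_{k+1}$ in the denominator (as the paper's appendix states it) would require the extra inequality $\mu_{k+1}(\bar a|s)^2 \leq \mu_k(\bar a|s)$, which does not hold in general. So your argument both reproduces the paper's route and supplies, self-contained and with consistent indexing, the key step the paper outsources.
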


\begin{remark}
The following remarks are in order:
\begin{itemize}
    \item In each of the above performance bounds, there is an inevitable constant error $(\overline{\delta}_0)$ due to function approximation. If the value function lies in the class of functions chosen, then this error will be zero.
    \item The error due to policy update (characterized by $\beta$) can be made arbitrarily small by choosing $\beta$ large. However, choosing a large $\beta$ may increase $C$ due to the fact that $C$ is a function of the mixing time of each policy, which is affected by the choice of $\beta.$ However, $C$ cannot be arbitrarily large due to Assumption (a). 
    \item The error due to the initial condition decays exponentially fast but the rate of decay can be slow because $\gamma$ can be very small. In this regard, it is interesting to compare our work with the results in \cite{abbasi2019politex,lazic2021improved}. We do so in the next subsection.
\end{itemize}
\end{remark}

\subsection[Comparison with other work]{Comparison with~\cite{abbasi2019politex,lazic2021improved}}

Our model for approximate policy iteration is intended to be a general framework to study policy-based RL algorithms. A specific class of such algorithms has been studied in \cite{abbasi2019politex,lazic2021improved}, where regret bounds are presented for a mirror descent like algorithm referred to as $\textsc{Politex}$. Although our model is intended for offline algorithms, it is possible to adapt the techniques in \cite{abbasi2019politex,lazic2021improved} to obtain regret bounds for our model as well. 

Let $r(s_t,a_t)$ be the reward obtained at time $t.$ Let $K$ be the total number of time units for which \cref{Alg: API2} is executed. Hence $K=T\tau$, where $\tau$ is the length of the sample trajectory for every iteration of policy evaluation. The regret, defined as  $ R(K) = \sum_{t=1}^K \pbr{J^* - r(s_t,a_t)}$ can be further decomposed as:
\begin{equation}
    R(K) = \sum_{t=1}^K \pbr{J^* - J_{\mu_t}}+\pbr{J_{\mu_t}-r(s_t,a_t)}.
\end{equation}
The first term i.e., $\sum_{t=1}^K \pbr{J^* - J_{\mu_t}}$ is referred to as the pseudo regret $R_\text{PS}(K)$. The second term is essentially bounding the average value from its estimate and can be bounded using a martingale analysis identical to \cite{abbasi2019politex,lazic2021improved}. Here, we focus on obtaining an upper bound on the pseudo regret with mirror descent update and TD learning. The pseudo regret can be further decomposed as:
\begin{equation*}
    R_\text{PS}(K) = \tau\sum_{t=1}^T \pbr{J^* - J_{\mu_t}}.
\end{equation*}
For the mirror descent policy update  in \cref{corollary:mdm}, we get the following expected regret bound
\begin{equation}
    R_\text{PS}(K) \leq \tau\sum_{t=1}^T (1-\gamma)^T c_0 + \frac{1+\gamma}{\gamma\beta}\log\pbr{\frac{1}{\omega}} + \frac{2}{\gamma}\pbr{\overline{\delta}_{0}+\frac{c_3}{\sqrt{\tau}}}.
\end{equation}
Let $\beta=\sqrt{\tau}$, then the above regret can be expressed as:
\begin{equation*}
    R_\text{PS}(K) \leq \frac{1}{\gamma}\pbr{{\tau c_0} + {2K\overline{\delta}_0}+\frac{Kc_5}{\sqrt{\tau}}}.
\end{equation*}
Optimizing for the regret yields $\tau = O(K^\frac{2}{3})$ and the following pseudo regret upper bound, 
\begin{equation}
    R_\text{PS}(K) \leq \frac{1}{\gamma}\pbr{2K\overline{\delta}_0+c_6 K^{\frac{2}{3}}}.
\end{equation}

We now compare our regret bounds to the ones in \cite{abbasi2019politex,lazic2021improved}. Since all of the bounds will have an inevitable error $(\overline{\delta}_0)$ due to function approximation, the comparison is with respect to the other terms in the regret bound.
\begin{itemize}
    \item We get a better order-wise bound on the regret compare to \cite{abbasi2019politex} which obtains a bound of $O(K^{3/4}).$ However, they have better constants which are in terms of mixing-time coefficients rather than in terms of $\gamma.$ 
    \item \cite{lazic2021improved} gets a better regret bound $(O(\sqrt{K}))$ than us. 
\end{itemize}
The results in \cite{abbasi2019politex,lazic2021improved} use Monte Carlo policy evaluation, which allows for a closed-form expression for the value function estimate. The structure of the closed-form expressions allows them to exploit the fact that the policy varies slowly from one iteration to the next (which is specific to the mirror descent policy update) and hence $\gamma$ does not appear in their analysis. However, the Monte Carlo policy evaluation algorithms use matrix inversion which is often too expensive to implement in practice unlike the TD learning algorithm in our analysis which is widely used. Nevertheless, the ideas in \cite{abbasi2019politex,lazic2021improved} provide a motivation to investigate whether one can extend their analysis to more general policy evaluation and policy improvement algorithms. This is an interesting topic for future research. 

\section {Conclusions}

We present a general framework for analyzing policy-based reinforcement learning algorithms for average-reward MDPs. Our main contribution is to obtain performance bounds without using the natural source of contraction available in discounted reward formulations. We apply our general framework to several well-known RL algorithms to obtain finite-time error bounds on the average reward.
We conclude with a comparative regret analysis and an outline of possible future work.  
\small
\bibliographystyle{alpha}
\bibliography{references.bib}

\newpage
\section{Appendix}
\subsection{Discussion on Assumption 3.1}
\subsubsection{Assumption 3.1(a)}
In order to ensure every policy induces an irreducible Markov chain, we modify the MDP where at every time step with probability $\epsilon$, an action is chosen from the set of all possible actions with equal probability. Simultaneously, with probability $1-\epsilon$, we choose an action dictated by some policy. Let the transition kernel of the MDP associated with policy $\mu$ be denoted as $\P_\mu$. We denote the modified kernel by $\widehat\P_\mu$, where the transition probability from state $i$ to $j$ is given by:
    \begin{equation*}
        \widehat\P_\mu(j|i) = (1-\epsilon)\P_\mu(j|i) + \epsilon \pbr{\frac{1}{|\A|}\sum_{a\in\A}\P(j|i,a)}
    \end{equation*}
    Let $\P_\rho(j|i)=\frac{1}{|\A|}\sum_{a\in\A}\P(j|i,a),$ where $\rho$ represents randomized policy, that is $\rho(a|i) = \frac{1}{|\A|.}$ Let $(J_\mu,V_\mu)$ be the average reward and state value function vector associated with the policy $\mu.$ Then they the satisfy the following Bellman Equation:
    \begin{equation*}
        J_\mu + V_\mu = r_\mu + \P_\mu V_\mu,
    \end{equation*}
    where $r_\mu(i)=\sum_{a\in\A}\mu(a|i)r(i,a).$ Similarly $(\widehat J_\mu,\widehat V_\mu)$ be the average reward and state value function vector that satisfy the following Bellman Equation, 
    \begin{equation*}
        \widehat J_\mu + \widehat V_\mu = \widehat r_\mu + \widehat\P_\mu \widehat V_\mu
    \end{equation*}
    where $\widehat r_\mu = (1-\epsilon)r_\mu + \epsilon r_\rho$. Since $\widehat\P_\mu = (1-\epsilon)\P_\mu+ \epsilon\P_\rho,$ the above equation can be rewritten as:
    \begin{equation*}
        \widehat J_\mu = \pbr{1-\epsilon}\pbr{r_\mu + \P_\mu \widehat V_\mu - \widehat V_\mu} + \epsilon\pbr{r_\rho + \P_\rho \widehat V_\mu - \widehat V_\mu}
    \end{equation*}
    Multiplying the above equation by the vector $\P^*_\mu$, which is the invariant distribution over the state space due to policy $\mu$, we obtain,
    \begin{equation*}
        \widehat J_\mu = \pbr{1-\epsilon}\pbr{\P^*_\mu r_\mu} + \epsilon\P^*_\mu\pbr{r_\rho + \P_\rho \widehat V_\mu - \widehat V_\mu}
    \end{equation*}
    since $\P^*_\mu\P_\mu=\P^*_\mu$. We also know that $J_\mu = \P^*_\mu r_\mu.$. Hence we obtain, 
    \begin{equation*}
        \widehat J_\mu = \pbr{1-\epsilon}J_\mu+ \epsilon\P^*_\mu\pbr{r_\rho + \P_\rho \widehat V_\mu - \widehat V_\mu}
    \end{equation*}
    Therefor we obtain the following result,
    \begin{equation*}
        J_\mu - \widehat J_\mu = \epsilon\pbr{J_\mu -\P^*_\mu\pbr{r_\rho+\P_\rho\widehat V_\mu - \widehat V_\mu}}
    \end{equation*}
    Since the rewards are bounded, so are $J_\mu, r_\rho$ and $\P_\rho \widehat V_\mu-\widehat V_\mu$ for all $\mu$. Hence we obtain,
    \begin{equation*}
        J_\mu - \widehat J_\mu = O(\epsilon)
    \end{equation*}
    The difference in the average reward associated with the original MDP and the modified MDP vary by $O(\epsilon).$
    Let the optimal policy corresponding to original MDP be $\mu^*$ and the optimal policy corresponding to modified MDP be $\widehat\mu^*$. Then,
    \begin{equation*}
        J_{\mu^*} - \widehat J_{\widehat\mu^*}  = J_{\mu^*} -  \widehat J_{\mu^*} + \widehat J_{\mu^*} -  \widehat J_{\widehat\mu^*}
    \end{equation*}
    Since $\widehat\mu^*$ is the optimizing policy for the modified MDP, we have $\widehat J_{\mu^*} -  \widehat J_{\widehat\mu^*} \leq 0$. 
    Hence, we obtain, 
    \begin{equation*}
        J_{\mu^*} - \widehat J_{\widehat\mu^*}  \leq J_{\mu^*} -  \widehat J_{\mu^*} = O(\epsilon).
    \end{equation*}
    Similarly, 
    \begin{equation*}
        J_{\mu^*} - \widehat J_{\widehat\mu^*}  = J_{\mu^*} -  J_{\widehat \mu^*} + J_{\widehat \mu^*} -  \widehat J_{\widehat\mu^*}
    \end{equation*}
    Since $\mu^*$ is the optimizing policy for the original MDP, we have $J_{\mu^*}-J_{\widehat\mu^*}\geq 0$. Hence we obtain, 
    \begin{equation*}
        J_{\mu^*} - \widehat J_{\widehat\mu^*}  \geq J_{\widehat\mu^*} -  \widehat J_{\widehat\mu^*} = O(\epsilon).
    \end{equation*}
    Thus the optimal average reward of the original MDP and modified MDP differ by $O(\epsilon).$ That is,
    \begin{equation*}
        |J_{\mu^*} - \widehat J_{\widehat\mu^*}|\leq O(\epsilon)
    \end{equation*}

\subsubsection{Assumption 3.1 (b)}
To ensure Assumption 3.1 (b) is satisfied, an aperiodicity transformation can be implemented. Under this transformation, the transition probabilities of the original MDP are modified such that the probability of staying in any state under any policy is non-zero. In order to compensate for the change in transition kernel, the single step rewards are analogously modified such that the Bellman equation corresponding to the original and transformed dynamics are scaled versions of one another. This thus ensures that the optimality of a policy remains the same irrespective of this transformation, along with yielding quantifiable convergence properties. Mathematically, this transformation is described below. 

\begin{definition}[Aperiodicity Transformation]
Let $\kappa\in\pbr{0,1}$. For every policy $\mu\in\Pi$ and for all states $i,j\in\S$, consider the following transformation:
\begin{align}\label{transformation}
    \widehat{\P}_\mu\pbr{i|i} &= \kappa + (1-\kappa)\P_\mu\pbr{i|i} \\
    \widehat{\P}_\mu\pbr{j|i} &= (1-\kappa)\P_\mu\pbr{j|i}, \qquad j\neq i \\ 
    \widehat{r}(i,\mu(i)) &= (1-\kappa)r(i,\mu(i)) \label{transformation1}
\end{align}
\end{definition}

\begin{theorem}
    Given the transformation in~\cref{transformation}-\cref{transformation1}, for every $\mu\in\Pi$, let $(J_\mu, h_\mu)$ and $(\widehat{J}_\mu, \widehat{h}_\mu)$ be the solution to the Bellman Equation (Equation 2) corresponding to the original MDP $(\P_\mu,r_\mu)$ and the transformed MDP $(\widehat{\P}_\mu,\widehat{r}_\mu)$. Then,
    \begin{equation*}
        \pbr{\widehat{J}_\mu, \widehat{h}_\mu} = \pbr{(1-\kappa)J_\mu, h_\mu}
    \end{equation*}
\end{theorem}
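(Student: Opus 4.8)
The plan is to verify the claimed identity directly by substitution and then appeal to uniqueness. First I would rewrite the componentwise transformation \cref{transformation}--\cref{transformation1} in matrix form: for each fixed $\mu$,
\[
\widehat{\P}_\mu = \kappa I + (1-\kappa)\P_\mu, \qquad \widehat{r}_\mu = (1-\kappa) r_\mu,
\]
where $I$ is the $n\times n$ identity. Before doing anything with the Bellman equation, I would check that $\widehat{\P}_\mu$ is a legitimate transition kernel: each row sums to $\kappa + (1-\kappa)\sum_j \P_\mu(j|i) = \kappa + (1-\kappa) = 1$, all entries are nonnegative since $\kappa\in(0,1)$, and the diagonal entries are bounded below by $\kappa>0$ (which is exactly the point of the transformation, giving Assumption~3.1(b)). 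Since $\widehat{\P}_\mu$ is a convex combination of $I$ and the irreducible kernel $\P_\mu$, it inherits irreducibility; hence the transformed average-reward Bellman equation has a well-defined scalar $\widehat{J}_\mu$ and a relative value function $\widehat{h}_\mu$ that is unique once we fix the normalization $\widehat{h}_\mu(x^*)=h_\mu(x^*)=0$.

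Next I would substitute the candidate pair $(\widehat{J}_\mu,\widehat{h}_\mu) = ((1-\kappa)J_\mu,\, h_\mu)$ into the transformed Bellman equation $\widehat{J}_\mu \onevec + \widehat{h}_\mu = \widehat{r}_\mu + \widehat{\P}_\mu \widehat{h}_\mu$ and show both sides agree. The right-hand side becomes
\[
(1-\kappa) r_\mu + \pbr{\kappa I + (1-\kappa)\P_\mu} h_\mu = (1-\kappa)\pbr{r_\mu + \P_\mu h_\mu} + \kappa h_\mu .
\]
Here is where the original Bellman \cref{BellmanEquation}, namely $r_\mu + \P_\mu h_\mu = J_\mu\onevec + h_\mu$, gets used: the right-hand side collapses to $(1-\kappa)\pbr{J_\mu\onevec + h_\mu} + \kappa h_\mu = (1-\kappa)J_\mu\onevec + h_\mu$, which is precisely the left-hand side $(1-\kappa)J_\mu\onevec + h_\mu$. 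Thus the candidate pair satisfies the transformed equation.

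Finally, I would invoke the uniqueness noted above to conclude that the candidate pair is the solution, i.e. $\widehat{J}_\mu = (1-\kappa)J_\mu$ and $\widehat{h}_\mu = h_\mu$. As an independent sanity check on the scalar, I would observe that $\pi_\mu^\top \widehat{\P}_\mu = \kappa\pi_\mu^\top + (1-\kappa)\pi_\mu^\top \P_\mu = \pi_\mu^\top$, so the stationary distribution is unchanged, whence $\widehat{J}_\mu = \pi_\mu^\top \widehat{r}_\mu = (1-\kappa)\pi_\mu^\top r_\mu = (1-\kappa)J_\mu$ directly. I do not expect a genuine obstacle in this proof: the algebra is short. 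The only point requiring care is the uniqueness/normalization argument, since $h_\mu$ is determined only up to an additive multiple of $\onevec$; I would handle this by fixing the same reference state $x^*$ in both problems, so that matching the two relative value functions is unambiguous.
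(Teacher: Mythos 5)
Your proof is correct, but it is worth noting that the paper does not actually supply an argument for this theorem: its ``proof'' is a one-line citation to Schweitzer (1971). Your write-up is therefore genuinely self-contained where the paper defers to the literature, and the route you take---matrix form $\widehat{\P}_\mu = \kappa I + (1-\kappa)\P_\mu$, $\widehat{r}_\mu = (1-\kappa)r_\mu$, substitution of the candidate pair $\pbr{(1-\kappa)J_\mu,\, h_\mu}$ into the transformed Bellman equation, collapse of the right-hand side via the original Bellman equation, and then uniqueness under a fixed normalization---is exactly the standard argument the citation stands in for. The two details you flag are precisely the ones requiring care: first, that $\widehat{\P}_\mu$ is a legitimate stochastic matrix, inherits irreducibility from $\P_\mu$, and has diagonal entries bounded below by $\kappa$ (which is the entire purpose of the transformation, i.e., Assumption 3.1(b)); second, that the relative value function solving the average-reward Bellman equation is determined only up to an additive multiple of $\onevec$, so the asserted equality $\widehat{h}_\mu = h_\mu$ is only meaningful once both problems are anchored at the same reference state $x^*$---without that normalization the theorem as stated is slightly imprecise, and your handling of it is the right fix. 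Your closing sanity check, $\pi_\mu^\top \widehat{\P}_\mu = \pi_\mu^\top$ and hence $\widehat{J}_\mu = \pi_\mu^\top \widehat{r}_\mu = (1-\kappa)J_\mu$, is a nice independent confirmation of the scalar part that bypasses the value function entirely; it buys a clean intuition (the stationary distribution is unchanged while rewards are uniformly scaled by $1-\kappa$) that the pure substitution argument does not make visible.
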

\begin{proof}
    The proof of this theorem can be found in \cite{schweitzer1971iterative}. 
\end{proof}

\begin{remark}  
 Due to this bijective relationship between the original and the transformed problem, it suffices to solve the transformed problem. Given that the trajectory used for algorithms such as TD Learning corresponds to the original system, such a transformation necessitates a mild change in how samples from a given policy are utilized in TD learning. More specifically, for each (state, action) sample, with probability $\kappa,$ we have to add the same sample to the data set for the next time instant. With probability $1-\kappa$ choose the next state in the trajectory. Hence, this transformation is easy to incorporate in RL algorithms.
\end{remark}
\subsection{Proof of Theorem 3.3}
Prior to presenting the proof, we define
\begin{align}\label{eq:definition}
    &u_k  =  \max_i (\T h_k - h_k)(i), \\ \nonumber
    &l_k  =  \min_i (\T h_k - h_k)(i).
\end{align}
A key lemma in the proof of convergence of approximate policy iteration in the context of average reward is:
\begin{lemma}\label{lemma3.7}
  Let $J^*$ be the optimal average reward associated with the transformed MDP. For all $k \in \N$:
  \begin{equation*}
      l_k -\epsilon \leq J_{\mu_{k+1}} \leq  {J}^* \leq u_k 
  \end{equation*}
\end{lemma}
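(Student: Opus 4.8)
The plan is to prove the three inequalities $l_k-\epsilon\leq J_{\mu_{k+1}}$, $J_{\mu_{k+1}}\leq J^*$, and $J^*\leq u_k$ separately, with the outer two relying on a common multiplication trick. The central device is to take a coordinatewise inequality on a vector of the form $\T_\mu h_k - h_k$ and multiply it on the left by the stationary-distribution matrix $\P^*_\mu=\onevec\pi_\mu^\top$. Since $\P^*_\mu$ is a nonnegative (stochastic) matrix, this operation preserves inequalities; and since $\pi_\mu$ is stationary we have $\P^*_\mu\P_\mu=\P^*_\mu$ and $\P^*_\mu\onevec=\onevec$, so the relative-value terms cancel, $\P^*_\mu\pbr{\P_\mu h_k - h_k}=0$, and the reward term collapses to the scalar average reward via the identity $\P^*_\mu r_\mu = J_\mu\onevec$ established in \cref{BellmanEquation}.

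The middle inequality $J_{\mu_{k+1}}\leq J^*$ is immediate from the definition $J^*=\max_{\mu\in\Pi}J_\mu$. For the lower bound, I would start from the approximate-greedy guarantee $\|\T h_k-\T_{\mu_{k+1}}h_k\|_\infty\leq\epsilon$, which gives $\T_{\mu_{k+1}}h_k\geq \T h_k-\epsilon\onevec$; combined with the definition $l_k=\min_i\pbr{\T h_k-h_k}(i)$ this yields $r_{\mu_{k+1}}+\P_{\mu_{k+1}}h_k-h_k\geq\pbr{l_k-\epsilon}\onevec$. Multiplying on the left by $\P^*_{\mu_{k+1}}$ annihilates the $h_k$ terms and leaves $J_{\mu_{k+1}}\onevec\geq\pbr{l_k-\epsilon}\onevec$, i.e., $J_{\mu_{k+1}}\geq l_k-\epsilon$.

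For the upper bound $J^*\leq u_k$, I would apply the optimality operator at the optimal policy $\mu^*$: by the definitions of $\T$ and $u_k$, we have $r_{\mu^*}+\P_{\mu^*}h_k-h_k\leq \T h_k-h_k\leq u_k\onevec$. Multiplying on the left by $\P^*_{\mu^*}$ again cancels the $h_k$ terms and, using $\P^*_{\mu^*}r_{\mu^*}=J^*\onevec$, produces $J^*\onevec\leq u_k\onevec$, hence $J^*\leq u_k$.

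Since all three parts reduce to the same linear-algebra identities, I do not expect a serious obstacle. The only point requiring care is the cancellation $\P^*_\mu\pbr{\P_\mu h_k-h_k}=0$, which is precisely the statement that averaging against the stationary distribution removes the telescoping part of the relative value function; the irreducibility hypothesis in \cref{assump3.1} guarantees that $\pi_\mu$, and hence $\P^*_\mu$, is well defined and unique for every policy, so the multiplication trick is valid at each step.
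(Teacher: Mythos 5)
Your proposal is correct and follows essentially the same argument as the paper's proof: all three inequalities are obtained exactly as in the paper, with the two nontrivial ones derived by left-multiplying the coordinatewise bounds on $\T_{\mu_{k+1}}h_k - h_k$ and $\T_{\mu^*}h_k - h_k$ by the stationary matrices $\P^*_{\mu_{k+1}}$ and $\P^*_{\mu^*}$, using $\P^*_\mu\P_\mu = \P^*_\mu$ to cancel the relative-value terms and $\P^*_\mu r_\mu = J_\mu\onevec$ to recover the average reward. The only difference is presentational: you make explicit the inequality-preservation and $\P^*_\mu\onevec = \onevec$ facts that the paper uses implicitly.
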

\begin{proof}
    From definition,
\alns{
l_k \onevec &\leq \T h_k - h_k
}
Since $\|\T h_k - \T_{\mu_{k+1}}h_k\|_\infty \leq \epsilon$,
\alns{
    l_k \onevec  &\leq \T_{\mu_{k+1}}h_k - h_k +\epsilon \onevec\\
      & = r_{\mu_{k+1}} + \P_{\mu_{k+1}}h_k - h_k +\epsilon \onevec\\
    \P_{\mu_{k+1}}^*l_k \onevec & \leq \P_{\mu_{k+1}}^*r_{\mu_{k+1}} + \P_{\mu_{k+1}}^*\P_{\mu_{k+1}}h_k-\P_{\mu_{k+1}}^*h_k + \P_{\mu_{k+1}}^* \epsilon \onevec   
}
where $\P_{\mu_{k+1}}^*$ is a matrix whose rows are identical and are the invariant distribution associated with the probability transition kernel $\P_{\mu_{k+1}}$. Since $\P_{\mu_{k+1}}^*\P_{\mu_{k+1}}=\P_{\mu_{k+1}}^*$,
\alns{
     l_k \onevec &\leq \P^*_{\mu_{k+1}}r_{\mu_{k+1}} + \epsilon \onevec\\
     l_k \onevec - \epsilon \onevec  & \leq  {J}_{\mu_{k+1}}\onevec \\
      l_k - \epsilon &\leq  {J}_{\mu_{k+1}}
}
From definition, 
\alns{
 {J}^* = \max_{\mu\in\Pi}  {J}_\mu \geq  {J}_{\mu_{k+1}}
}
Let $\mu^*$ be the policy corresponding to the optimal average reward $ {J}^*$ and value function $h^*$, ie.,
\alns{
 {J}^* + h^* &= \max_\mu r_\mu +\P_\mu h^* \\&= r_{\mu^*} + \P_{\mu^*}h^*
}
Then,
\alns{
u_k \onevec &\geq \T h_k - h_k \\
      & \stackrel{(a)}{\geq} \T_{\mu^*}h_k - h_k \\
      & = r_{\mu^*} + \P_{\mu^*}h_k - h_k \\
    \P_{\mu^*}^* u_k \onevec & \geq \P_{\mu^*}^* r_{\mu^*} + \P_{\mu^*}^*\P_{\mu^*}h_k - \P_{\mu^*}^* h_k \\
    u_k \onevec & \geq \P^*_{\mu^*}r_{\mu^*} \\
    u_k & \geq  {J}^* 
}
where $(a)$ is due to the fact that $\T$ is the maximising Bellman operator. \\
Hence $\forall k \in \N$,
\alns{
l_k - \epsilon \leq  {J}_{\mu_{k+1}}  \leq  {J}^* \leq u_k 
}
\end{proof}
Without any approximation, the above lemma indicates that there exists a state whose value function changes by an amount less than the average reward associated with the optimizing policy at that iteration, and also that there exists a state whose value function increases by an amount larger than the optimal average reward. 

The proof proceeds by showing a geometric convergence rate for $l_k$. More precisely, the following lemma is proved.
\begin{lemma}\label{lemma3.9}
    For all $k\in\N$, it is true that:
    \begin{equation*}
        \pbr{ {J}^*-l_k} \leq \pbr{1-\gamma}\pbr{ {J}^*-l_{k-1}}+\epsilon+2\delta
    \end{equation*}
\end{lemma}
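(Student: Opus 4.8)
The plan is to establish the one-step recursion by producing a lower bound on $l_k$ that links it to $l_{k-1}$ with a genuine $(1-\gamma)$ contraction factor. I would split the argument into two parts: relating $l_k$ to the average reward $J_{\mu_k}$ of the policy whose (approximate) value function is $h_k$, and then relating $J_{\mu_k}$ back to $l_{k-1}$, which is where the contraction must come from. Throughout I use that $h_k$ is the $\delta$-approximate value function of $\mu_k$, so $\|h_k - h_{\mu_k}\|_\infty \leq \delta$, and that $\mu_k$ is the $\epsilon$-approximate greedy policy with respect to $h_{k-1}$, so $\|\T h_{k-1} - \T_{\mu_k}h_{k-1}\|_\infty \leq \epsilon$; these facts, together with Lemmas~\ref{lemma3.3} and~\ref{lemma3.7}, are the only inputs.

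\textbf{From $l_k$ to $J_{\mu_k}$.} Since $\T$ maximizes, $\T h_k \geq \T_{\mu_k}h_k$ pointwise, so $\T h_k - h_k \geq r_{\mu_k} + \P_{\mu_k}h_k - h_k$. Writing $h_k = h_{\mu_k} + (h_k - h_{\mu_k})$ and using the policy Bellman identity $\T_{\mu_k}h_{\mu_k} - h_{\mu_k} = J_{\mu_k}\onevec$, the right-hand side becomes $J_{\mu_k}\onevec + (\P_{\mu_k}-I)(h_k - h_{\mu_k})$, each entry of which is at least $J_{\mu_k} - 2\delta$ because $\P_{\mu_k}$ is stochastic and $\|h_k - h_{\mu_k}\|_\infty \leq \delta$. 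Taking the minimum over states gives $l_k \geq J_{\mu_k} - 2\delta$, so it remains to bound $J^* - J_{\mu_k}$.

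\textbf{The contraction, from $J_{\mu_k}$ to $l_{k-1}$.} Set $\phi_k := \T_{\mu_k}h_{k-1} - h_{k-1}$. Multiplying by $\pi_{\mu_k}$ and using $\pi_{\mu_k}^\top \P_{\mu_k} = \pi_{\mu_k}^\top$ gives $\pi_{\mu_k}^\top \phi_k = \pi_{\mu_k}^\top r_{\mu_k} = J_{\mu_k}$. The approximate-greedy bound gives two estimates on $\phi_k$: globally $\phi_k \geq (\T h_{k-1} - h_{k-1}) - \epsilon\onevec \geq (l_{k-1}-\epsilon)\onevec$, and at the state $i^\star$ attaining $u_{k-1}$ we have $\phi_k(i^\star) \geq u_{k-1} - \epsilon \geq J^* - \epsilon$, using $J^* \leq u_{k-1}$ from Lemma~\ref{lemma3.7}. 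I would then split $J^* - J_{\mu_k} = \sum_i \pi_{\mu_k}(i)\pbr{J^* - \phi_k(i)}$ into the $i^\star$ term, bounded by $\pi_{\mu_k}(i^\star)\epsilon$, and the remaining terms, each bounded by $J^* - l_{k-1} + \epsilon$, obtaining $J^* - J_{\mu_k} \leq \epsilon + (1-\pi_{\mu_k}(i^\star))(J^* - l_{k-1})$. Since $J^* - l_{k-1} \geq 0$ (Lemma~\ref{lemma3.7}) and $\pi_{\mu_k}(i^\star) \geq \gamma$ (Lemma~\ref{lemma3.3}), this yields $J^* - J_{\mu_k} \leq (1-\gamma)(J^* - l_{k-1}) + \epsilon$; combined with $J^* - l_k \leq (J^* - J_{\mu_k}) + 2\delta$ from the first step, this is exactly the claim.

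The main obstacle will be the contraction step, not the bookkeeping of the $\delta$ errors. A naive use of only the global bound $\phi_k \geq (l_{k-1}-\epsilon)\onevec$ yields $J^* - J_{\mu_k} \leq (J^* - l_{k-1}) + \epsilon$, with coefficient $1$ and no contraction at all. The factor $(1-\gamma)$ appears only because $\phi_k$ is already within $\epsilon$ of $J^*$ at the maximizing state $i^\star$, and the irreducibility bound $\pi_{\mu_k}(i^\star)\geq\gamma$ forces that favorable state to carry at least $\gamma$ of the stationary mass, dragging the average $J_{\mu_k}$ a definite fraction of the way from $l_{k-1}$ up toward $J^*$. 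Pinpointing and correctly weighting this single heavy-mass coordinate is the crux of the argument.
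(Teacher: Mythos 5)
Your overall strategy is essentially the paper's own, recast in slightly different notation: both arguments rest on the approximate-greedy inequality $\T_{\mu_k}h_{k-1}-h_{k-1}\geq \T h_{k-1}-h_{k-1}-\epsilon\onevec$, averaging against the stationary distribution $\pi_{\mu_k}$, the fact that the state attaining $u_{k-1}$ carries stationary mass at least $\gamma$ (\cref{lemma3.3}), the bound $u_{k-1}\geq J^*$ (\cref{lemma3.7}), and a $2\delta$ accounting for the evaluation error. Your identity $J_{\mu_k}=\pi_{\mu_k}^\top\pbr{\T_{\mu_k}h_{k-1}-h_{k-1}}$ is a clean restatement of the paper's step $g_k \geq \P^*_{\mu_k}g_{k-1}-(\epsilon+2\delta)\onevec$ (and your appeal to the Bellman identity $\T_{\mu_k}h_{\mu_k}-h_{\mu_k}=J_{\mu_k}\onevec$ for the iterate $h_{\mu_k}=\lim_m\widetilde{\T}^m_{\mu_k}h_{k-1}$ is legitimate; the paper verifies exactly this). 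However, there is one genuine gap: you justify $J^*-l_{k-1}\geq 0$ by \cref{lemma3.7}, but that lemma only gives $l_{k-1}-\epsilon\leq J_{\mu_{k}}\leq J^*$, i.e.\ $J^*-l_{k-1}\geq-\epsilon$. Nonnegativity is precisely what you need to replace $1-\pi_{\mu_k}(i^\star)$ by $1-\gamma$: if $J^*-l_{k-1}$ were negative, multiplying it by the \emph{smaller} factor $1-\pi_{\mu_k}(i^\star)\leq 1-\gamma$ makes it \emph{larger}, so the inequality flips; patching with only $J^*-l_{k-1}\geq-\epsilon$ degrades your recursion to $J^*-l_k\leq(1-\gamma)\pbr{J^*-l_{k-1}}+2\epsilon+2\delta$, which is strictly weaker than the claimed bound.

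The gap is reparable in two ways, and it is instructive that the paper's proof is organized so as to never need the sign of $J^*-l_{k-1}$. (i) Reorder the comparison as the paper does: first bound $\pi_{\mu_k}^\top g_{k-1}\geq \pi_{\mu_k}(i^\star)u_{k-1}+\pbr{1-\pi_{\mu_k}(i^\star)}l_{k-1} = l_{k-1}+\pi_{\mu_k}(i^\star)\pbr{u_{k-1}-l_{k-1}} \geq \gamma u_{k-1}+(1-\gamma)l_{k-1}$, which uses only $u_{k-1}\geq l_{k-1}$ (trivially true: max versus min of the same vector $g_{k-1}$), and only \emph{then} invoke $u_{k-1}\geq J^*$; this gives $J_{\mu_k}\geq \gamma J^*+(1-\gamma)l_{k-1}-\epsilon$ with no sign condition. (ii) Alternatively, prove $l_{k-1}\leq J^*$ outright by your own averaging trick applied to the \emph{exact} greedy policy $\bar\mu_k$ (which exists since the action space is finite), i.e.\ $\T h_{k-1}=\T_{\bar\mu_k}h_{k-1}$: then $l_{k-1}\leq \pi_{\bar\mu_k}^\top\pbr{\T h_{k-1}-h_{k-1}}=\pi_{\bar\mu_k}^\top r_{\bar\mu_k}=J_{\bar\mu_k}\leq J^*$. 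Either repair closes the argument and recovers the stated constant $\epsilon+2\delta$.
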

\begin{proof}
     Define:
\begin{equation*}
    g_k(i) = \pbr{\T h_k - h_k}(i)   
\end{equation*}
\begin{align*}
g_{k}(i) & \geq\left(\T_{\mu_{k}} h_{k}-h_{k}\right)(i)
\end{align*} 
Since $\T$ is the maximizing Bellman operator. \\
We know that: 
\begin{enumerate}
    \item $\left\|h_{k}-h_{\mu_{k}}\right\|_{\infty} \leq \delta$
    \item For all constant $p \in \mathbb{R}$:
    \begin{align*}
        \T_{\mu_{k}}\left(h_{k}+p \onevec\right)&=r_{\mu_{k}}+\P_{\mu_{k}}\left(h_{k}+p \onevec\right) \\ &=r_{\mu_{k}}+\P_{\mu_{k}} h_{k}+p \onevec\\
        &=\T_{\mu_{k}} h_{k}+p \onevec
\end{align*}
\end{enumerate}
We thus obtain,
\begin{align}\label{eq:gk0}
    g_{k}(i) & \geq \T_{\mu_k}(h_{\mu_{k}}-\delta) - (h_{\mu_{k}}+\delta) \\
& \geq\left(\T_{\mu_{k}} h_{\mu_{k}}-h_{\mu_{k}}\right)(i)-2 \delta
\end{align}
Recall that $h_{\mu_{k}} = \lim_{m\to\infty}\widetilde{T}^m_{\mu_{k+1}}h_k$
\alns{
\widetilde{\T}_{\mu_{k}}^{m} h_{k-1} 
        & =\widetilde{\T}_{\mu_{k}}^{m-1} \widetilde{\T}_{\mu_{k}} h_{k-1} \\
        & =\widetilde{\T}_{\mu_{k}}^{m-1}\left(r_{\mu_{k}}+\P_{\mu_{k}} h_{k-1}- r_{\mu_k}(x^*)\onevec - \left(\P_{\mu_k} h_{k-1}\right)(x^*)\onevec \right) \\
        & =\widetilde{\T}_{\mu_{k}}^{m-1}\left(\T_{\mu_{k}} h_{k-1}-\left(\T_{\mu_k} h_{k-1}\right)(x^*) \onevec\right) \\
        & =\widetilde{\T}_{\mu_{k}}^{m-2} \widetilde{\T}_{\mu_{k}}\left(\T_{\mu_{k}} h_{k-1}-\left(\T_{\mu_k} h_{k-1}\right)(x^*) \onevec\right) \\
        & = \widetilde{\T}_{\mu_{k}}^{m-2}\pbr{r_{\mu_k}+\P_{\mu_k}\pbr{\T_{\mu_k}h_{k-1}-\T_{\mu_k}h_{k-1}(x^*)}-r_{\mu_k}(x^*)\onevec - 
        \left(\P_{\mu_k}\pbr{\T_{\mu_k}h_{k-1}-\T_{\mu_k}h_{k-1}(x^*)}\right)(x^*)\onevec}\\
        & = \widetilde{\T}_{\mu_{k}}^{m-2} \pbr{r_{\mu_k}+\P_{\mu_k}\pbr{\T_{\mu_k}h_{k-1}}-\T_{\mu_k}h_{k-1}(x^*)-r_{\mu_k}(x^*)\onevec-\P_{\mu_k}\pbr{\T_{\mu_k}h_{k-1}}(x^*)\onevec+\T_{\mu_k}h_{k-1}(x^*)} \\
        & = \widetilde{\T}_{\mu_{k}}^{m-2}\pbr{\T^2_{\mu_k}h_{k-1}-\T^2_{\mu_k}h_{k-1}(x^*)\onevec}
}
Iterating, we thus obtain, 
\begin{equation*}
    h_{\mu_{k}} = \lim_{m\to\infty}\T^m_{\mu_k}h_{k-1}-\T^m_{\mu_k}h_{k-1}(x^*)\onevec
\end{equation*}
Substituting back in \cref{eq:gk0},
\begin{align*}
    g_k(i) &\geq \left(\T_{\mu_{k}}\pbr{\lim_{m\to\infty}\T^m_{\mu_k}h_{k-1}-\T^m_{\mu_k}h_{k-1}(x^*)\onevec}-\lim_{m\to\infty}\T^m_{\mu_k}h_{k-1}-\T^m_{\mu_k}h_{k-1}(x^*)\onevec\right)(i)-2 \delta \\
    & = \pbr{\T_{\mu_{k}}\pbr{\lim_{m\to\infty}\T^m_{\mu_k}h_{k-1}}-\lim_{m\to\infty}\T^m_{\mu_k}h_{k-1}(x^*)\onevec-\lim_{m\to\infty}\T^m_{\mu_k}h_{k-1}-\lim_{m\to\infty}\T^m_{\mu_k}h_{k-1}(x^*)\onevec}(i) - 2\delta \\
    & = \pbr{\T_{\mu_{k}}\pbr{\lim_{m\to\infty}\T^m_{\mu_k}h_{k-1}}-\lim_{m\to\infty}\T^m_{\mu_k}h_{k-1}}(i) - 2\delta 
\end{align*}
Since $\T_{\mu_k}$ is a continuous operator, it is possible to take the limit outside. We thus obtain,
\begin{align*}
    g_k(i) &\geq \lim_{m\to\infty}\pbr{\T^{m+1}_{\mu_k}h_{k-1}-\T^m_{\mu_k}h_{k-1}}(i) - 2\delta \\
    & = \lim_{m\to\infty}\pbr{\T^{m}_{\mu_k}\T_{\mu_k}h_{k-1}-\T^m_{\mu_k}h_{k-1}}(i) - 2\delta \\
    & = \lim_{m\to\infty}\pbr{\P^m_{\mu_k}\pbr{\T_{\mu_k}h_{k-1}-h_{k-1}}}(i) - 2\delta \\
    & = \P^*_{\mu_k}\pbr{\T_{\mu_k}h_{k-1}-h_{k-1}}(i) - 2\delta
\end{align*}
Since $\left\|T_{\mu_{k}} h_{k-1}-T h_{k-1}\right\|_{\infty} \leq \epsilon$,
\begin{align*}
    g_k(i) &\geq \P^*_{\mu_k}\pbr{\T h_{k-1}-h_{k-1}}(i) - 2\delta - \epsilon \\
    & = \pbr{\P^*_{\mu_k}g_{k-1}}(i) - 2\delta - \epsilon 
\end{align*}
Note that from Lemma 3.2 in the mainpaper, we know that 
    \begin{equation*}
         \min_{i,j\in\S} {\P}^*_{\mu_k}(j|i)>\gamma>0
    \end{equation*}
 Also, since every policy is assumed to induce an irreducible Markov Process, we have a limiting distribution with all positive entries. We thus obtain the following crucial relationship,
\begin{align}\label{eq108}
    g_k(i) \geq (1-\gamma)l_{k-1} + \gamma u_{k-1}- 2\delta - \epsilon 
\end{align}
Since the above inequation is true for all states $i$, it has to also be true for $\argmin_i \pbr{\T h_k - h_k}(i)$. \\
Hence, we obtain, 
\begin{align*}
    l_k \geq (1-\gamma)l_{k-1} + \gamma u_{k-1}- 2\delta - \epsilon 
\end{align*}
Note that from Lemma \ref{lemma3.7}, we know that $u_k \geq  {J}^*$ for all $k\in \N$. \\
Hence we obtain, 
\begin{align*}
    l_k \geq (1-\gamma)l_{k-1} + \gamma  {J}^*- 2\delta - \epsilon
\end{align*}
Upon rearranging we obtain the result, that is, 
\begin{align*}
    \pbr{ {J}^* - l_k} \leq (1-\gamma)\pbr{ {J}^* - l_{k-1}} + 2\delta +\epsilon
\end{align*}
\end{proof}

We now present the proof of Theorem 3.3. 

From Lemma~\ref{lemma3.9}, we thus have,
\begin{equation}\label{eq131}
    \pbr{ {J}^* - l_k} \leq (1-\gamma)\pbr{ {J}^* - l_{k-1}} + 2\delta +\epsilon
\end{equation}
However, we know that,
\begin{equation*}
    l_k - \epsilon \leq  {J}^*
\end{equation*}
In order to iterate \cref{eq131}, need to ensure the terms are non-negative. Rearranging the terms thus yields,
\begin{equation*}
    \pbr{ {J}^* - l_k + \epsilon} \leq (1-\gamma)\pbr{ {J}^* - l_{k-1}+\epsilon} + 2\delta +(1+\gamma)\epsilon
\end{equation*}
Let $$\omega = 2\delta + (1+\gamma)\epsilon$$ and $$a_k =  {J}^* - l_k + \epsilon.$$
Then upon iterating, we obtain, 
\begin{align*}
    a_k &\leq(1-\gamma)a_{k-1} + \omega \\
    & \leq (1-\gamma)((1-\gamma)a_{k-2} + \omega)+\omega\\
    & \leq \omega \frac{1-(1-\gamma)^k}{\gamma} + (1-\gamma)^k a_0
\end{align*}
Substituting back, we obtain,
\begin{align}\label{eq142}
    \pbr{ {J}^* - l_k + \epsilon} \leq \pbr{\frac{1-(1-\gamma)^k}{\gamma} }\pbr{2\delta + (1+\gamma)\epsilon} +  (1-\gamma)^k( {J}^* - l_0 + \epsilon)
\end{align}
\begin{align*}
    \pbr{ {J}^* - l_k} \leq \pbr{\frac{1-(1-\gamma)^k}{\gamma}} \pbr{2\delta + (1+\gamma)\epsilon} - \epsilon +  (1-\gamma)^k( {J}^* - l_0 + \epsilon)
\end{align*}
Note that from Lemma~\ref{lemma3.7} we know that 
\begin{equation*}
    l_k \leq  {J}_{\mu_{k+1}} + \epsilon \leq  {J}^* + \epsilon
\end{equation*}
Hence we get,
\begin{equation*}
    \pbr{ {J}^* -  {J}_{\mu_{k+1}}}\leq\pbr{ {J}^* - l_k + \epsilon}
\end{equation*}
Thus we obtain,
\begin{equation*}
        \pbr{ {J}^* -  {J}_{\mu_{k+1}}} \leq \pbr{\frac{1-(1-\gamma)^k}{\gamma}} \pbr{2\delta + (1+\gamma)\epsilon}  +  (1-\gamma)^k( {J}^* - l_0 + \epsilon)
\end{equation*}

Theorem 3.3 presents an upper bound on the error in terms of the average reward. Recall that, for the optimal relative value function $h$, the following relationship holds: $Th-h=J^*\onevec.$ Thus, it is also interesting to understand how $Th_k-h_k$ behaves under approximate policy iteration. The following proposition characterizes the behavior of this term.
\begin{proposition}
The iterates generated from approximate policy iteration algorithm \ref{Alg: API} satisfy the following bound:
    \begin{align*}
     \pbr{ u_{k-1}  -  l_{k-1}} & \leq  \underbrace{\pbr{\frac{1 - (1 - \gamma)^k}{\gamma^2}  } \pbr{2\delta  +  (1 + \gamma)\epsilon}  +  \frac{2\delta + \epsilon}{\gamma}}_\text{approximation error} +  \underbrace{\frac{(1-\gamma)^k( {J}^* - l_0 + \epsilon)}{\gamma}}_{\text{initial condition error}} ,
    \end{align*}
where $u_k, l_k$ are defined in \cref{eq:definition}.
\end{proposition}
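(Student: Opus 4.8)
The plan is to leverage the machinery already assembled for \cref{maintheorem}, observing that the span $u_{k-1}-l_{k-1}$ is controlled by the same contraction relation used there to drive $l_k$ toward $J^*$. The only genuinely new idea is an algebraic rearrangement of \cref{eq108}; after that, the proof is a substitution of bounds proved earlier in the excerpt (\cref{lemma3.7} and the iterated recursion \cref{eq142}).

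First I would recall the pointwise inequality \cref{eq108} established inside the proof of \cref{lemma3.9}: for every state $i$, $g_k(i)\ge(1-\gamma)l_{k-1}+\gamma u_{k-1}-2\delta-\epsilon$, where $g_k(i)=\pbr{\T h_k-h_k}(i)$. Evaluating this at the minimizing state gives $l_k\ge(1-\gamma)l_{k-1}+\gamma u_{k-1}-2\delta-\epsilon$. The key step is to subtract $l_{k-1}$ from both sides and regroup the $\gamma u_{k-1}-\gamma l_{k-1}$ terms, which exposes the span directly:
\[
\gamma\pbr{u_{k-1}-l_{k-1}}\le\pbr{l_k-l_{k-1}}+2\delta+\epsilon .
\]
This converts the contraction relation (previously used only to bound how fast $l_k$ rises to $J^*$) into an upper bound on how spread out $\T h_{k-1}-h_{k-1}$ can be, and it is the heart of the argument.

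Next I would bound the increment $l_k-l_{k-1}$ from above. By \cref{lemma3.7} we have $l_k\le J^*+\epsilon$, hence $l_k-l_{k-1}\le\pbr{J^*-l_{k-1}+\epsilon}$. The quantity $J^*-l_{k-1}+\epsilon$ is exactly the object whose geometric decay is quantified by the closed-form bound \cref{eq142}; plugging in that bound (at the appropriate index) produces a term of the form $\tfrac{1-(1-\gamma)^{\cdot}}{\gamma}\pbr{2\delta+(1+\gamma)\epsilon}$ plus a geometrically decaying initial-condition contribution $(1-\gamma)^{\cdot}\pbr{J^*-l_0+\epsilon}$. Substituting this into the displayed inequality and dividing through by $\gamma$ yields precisely the advertised structure: a $1/\gamma^2$-weighted approximation term, a standalone $\tfrac{2\delta+\epsilon}{\gamma}$ coming from the additive $2\delta+\epsilon$, and a decaying initial-condition term scaled by $1/\gamma$.

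The main obstacle is really the first rearrangement: recognizing that the lower bound \cref{eq108} on $g_k$ can be flipped into an upper bound on the span is the one non-routine move, since no infinity-norm contraction or span-contraction is invoked directly. The remaining work is bookkeeping, and the step I would verify most carefully is the exponent of $(1-\gamma)$ and the precise tally of the additive $\epsilon,\delta$ constants when \cref{eq142} is substituted for $J^*-l_{k-1}+\epsilon$ — this is where the extra factor of $1/\gamma$ relative to \cref{maintheorem} arises and where one must confirm the nonnegativity needed to use \cref{eq142}. Once the rearrangement is in hand, the proposition follows by substitution with no further analytic input.
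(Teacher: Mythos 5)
Your proposal retraces the paper's own argument step for step: start from \cref{eq108}, evaluate at the minimizing state to get $l_k \ge (1-\gamma)l_{k-1}+\gamma u_{k-1}-2\delta-\epsilon$, rearrange to expose $\gamma\pbr{u_{k-1}-l_{k-1}}$, control the increment via \cref{lemma3.7}, and close by substituting the iterated bound \cref{eq142}. So in approach there is nothing different between your proof and the paper's.

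The one discrepancy is exactly the point you flagged for careful checking, and there your version is the rigorous one. Your chain gives
\begin{equation*}
\gamma\pbr{u_{k-1}-l_{k-1}} \;\le\; \pbr{J^*-l_{k-1}+\epsilon} + 2\delta+\epsilon,
\end{equation*}
which forces \cref{eq142} to be applied at index $k-1$, so your final bound carries $(1-\gamma)^{k-1}$ and $\bigl(1-(1-\gamma)^{k-1}\bigr)/\gamma^2$. The paper instead writes $J^*-l_k+\epsilon \ge \gamma\pbr{u_{k-1}-l_{k-1}}-2\delta-\epsilon$, with $l_k$ in place of $l_{k-1}$, and then applies \cref{eq142} at index $k$; that is how the stated proposition acquires its $(1-\gamma)^k$ factors. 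But that step does not follow from what is available: \cref{eq108} and \cref{lemma3.7} yield only $l_k-l_{k-1}\ge \gamma\pbr{u_{k-1}-l_{k-1}}-2\delta-\epsilon$ and $l_k\le J^*+\epsilon$, which combine to the $l_{k-1}$ version; upgrading to $l_k$ would require $l_k-l_{k-1}\le J^*-l_k+\epsilon$, which is not established. Consequently your argument proves the proposition with $k$ replaced by $k-1$ throughout. This implies the stated approximation-error term (which only grows with $k$), but the stated initial-condition term $(1-\gamma)^k\pbr{J^*-l_0+\epsilon}/\gamma$ is genuinely smaller than what this route (or, as far as we can tell, any route) delivers: for a two-state MDP with zero rewards, a single policy with stationary distribution $(0.9,0.1)$ (so $\gamma=0.1$), $\epsilon=\delta=0$, and $h_0=(c,0)$, one has $u_0-l_0=c$ while the stated bound at $k=1$ evaluates to $0.9c$. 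So the off-by-one is a flaw in the paper's statement and proof, not in your argument; the asymptotic corollary that follows the proposition is unaffected either way.
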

\begin{proof}
    It is known from~\cref{eq108} that,
\begin{equation*}
    g_k(i) \geq (1-\gamma)l_{k-1} + \gamma u_{k-1}- 2\delta - \epsilon 
\end{equation*}
This further yields,
\begin{equation*}
    l_k \geq (1-\gamma)l_{k-1} + \gamma u_{k-1}- 2\delta - \epsilon 
\end{equation*}
From Lemma \ref{lemma3.7}, we know that, 
$$ l_k \leq  {J}^* + \epsilon $$
\begin{equation*}
     {J}^* + \epsilon \geq (1-\gamma)l_{k-1} + \gamma u_{k-1}- 2\delta - \epsilon
\end{equation*}
\begin{equation*}
     {J}^* - l_k + \epsilon \geq \gamma (u_{k-1}-l_{k-1}) - 2\delta - \epsilon
\end{equation*}
\begin{equation*}
    \gamma (u_{k-1}-l_{k-1}) \leq \pbr{ {J}^* - l_k + \epsilon} + 2\delta + \epsilon
\end{equation*}
From \cref{eq142}, we thus obtain,
\begin{equation*}
     \gamma (u_{k-1}-l_{k-1}) \leq \pbr{\frac{1-(1-\gamma)^k}{\gamma} }\pbr{2\delta + (1+\gamma)\epsilon} +  (1-\gamma)^k( {J}^* - l_0 + \epsilon)+ 2\delta + \epsilon
\end{equation*}
We thus obtain the result in Proposition 3.11,
\begin{equation*}
   \pbr{u_{k-1}-l_{k-1}} \leq \pbr{\frac{1-(1-\gamma)^k}{\gamma^2} }\pbr{2\delta + (1+\gamma)\epsilon} + \frac{2\delta+\epsilon}{\gamma} + \frac{(1-\gamma)^k( {J}^* - l_0 + \epsilon)}{\gamma}
\end{equation*}
\end{proof}
\begin{corollary}
The asymptotic behavior of the relative value function iterates is given by
\begin{equation*}
     \limsup_{k\to\infty}\pbr{u_{k}-l_{k}} \leq \frac{\epsilon\pbr{1+2\gamma}+2\delta\pbr{1+\gamma}}{\gamma^2}
\end{equation*}
\end{corollary}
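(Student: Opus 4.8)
The plan is to obtain this corollary as the immediate $k\to\infty$ limit of the preceding proposition, so essentially no new machinery is needed. First I would recall the finite-$k$ bound just established, namely that for every $k\in\N$,
\[
u_{k-1}-l_{k-1} \leq \pbr{\frac{1-(1-\gamma)^k}{\gamma^2}}\pbr{2\delta+(1+\gamma)\epsilon} + \frac{2\delta+\epsilon}{\gamma} + \frac{(1-\gamma)^k( {J}^*-l_0+\epsilon)}{\gamma}.
\]
The whole content of the corollary is already packed into this inequality; what remains is to pass to the limit and simplify.

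Next I would take $\limsup_{k\to\infty}$ of both sides. The key quantitative input is that $\gamma>0$, which follows from \cref{lemma3.3} under \cref{assump3.1} (every policy induces an irreducible chain, and the aperiodicity transformation forces a uniform lower bound on the stationary probabilities); consequently $1-\gamma\in[0,1)$ and the geometric factor $(1-\gamma)^k\to 0$. Hence the initial-condition term $\frac{(1-\gamma)^k( {J}^*-l_0+\epsilon)}{\gamma}$ vanishes and the coefficient $\frac{1-(1-\gamma)^k}{\gamma^2}$ converges to $\frac{1}{\gamma^2}$, leaving
\[
\limsup_{k\to\infty}\pbr{u_{k-1}-l_{k-1}} \leq \frac{2\delta+(1+\gamma)\epsilon}{\gamma^2} + \frac{2\delta+\epsilon}{\gamma}.
\]
Since shifting the index by one leaves a $\limsup$ unchanged, the left-hand side is exactly $\limsup_{k\to\infty}\pbr{u_k-l_k}$, which is the quantity in the statement.

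The only remaining step is the algebraic cleanup. Putting the two terms over the common denominator $\gamma^2$ gives
\[
\frac{2\delta+(1+\gamma)\epsilon+\gamma\pbr{2\delta+\epsilon}}{\gamma^2} = \frac{2\delta\pbr{1+\gamma}+\epsilon\pbr{1+2\gamma}}{\gamma^2},
\]
which is precisely the claimed bound. The honest assessment is that there is no real obstacle here: the corollary is a corollary, and this argument merely performs the limit and collects terms. The single point deserving an explicit line of justification is that $\gamma>0$ strictly, so that the exponential decay is genuine and the initial-condition error disappears; this is guaranteed by \cref{lemma3.3} and is what separates the present average-reward setting from the naive discounted-limit bound, where no such uniform contraction constant is available.
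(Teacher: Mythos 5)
Your proposal is correct and matches the paper's intended derivation: the corollary is stated immediately after the proposition precisely because it follows by letting $k\to\infty$ in that bound, using $\gamma\in(0,1)$ so that $(1-\gamma)^k\to 0$, shifting the index, and combining the two terms over the common denominator $\gamma^2$, exactly as you do. The algebra checks out, yielding $\frac{\epsilon(1+2\gamma)+2\delta(1+\gamma)}{\gamma^2}$.
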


\paragraph{Comment of the Novelty of our Proof Technique:} As mentioned in the main body of the paper, our proof is inspired by the proof of convergence of modified policy iteration in \cite{vanderwal}. However, since our algorithm is quite different from modified policy iteration due to the presence of errors in each step of the algorithm, special care is needed to obtain useful performance bounds. Specifically, the impact of the errors at each step have to be carefully bounded to ensure that the performance bounds do not blow up to infinity as it does when we obtain a similar result for the discounted-reward case and let the discount factor go to one.

\subsection{Proofs from Section 4}
Algorithm 2 and its analysis, adapted to the context of $Q$ function with time dependent approximation errors is presented in this section. Most RL algorithms use the state-action relative value function $Q$ instead of the relative state value function $h$ to evaluate a policy. The corresponding Bellman Equation in terms of $Q$ associated with any state-action pair $(s,a)$ is given by
\begin{equation*}
     {J}_\mu + Q_\mu(s,a) = r(s,a) + \pbr{\Q_\mu Q_\mu}(s,a)
\end{equation*}
where $\Q_\mu(s',a'|s,a) = \mu(a'|s')\P(s'|s,a).$
Since we are interested in randomized policies for exploration reasons, the irreducibility assumptions imposed on $\P_\mu$ also hold true for the transition kernel $\Q_\mu$. The state relative value function $h_\mu$ and state-action relative value function $Q_\mu$ are related as 
$h_\mu(s) = \sum_a \mu(a|s)Q_\mu(s,a)$. Consider the following similar definitions of the Bellman operators corresponding to the state-action value function:
$$
    (\Tq_\mu Q)(s,a) = r(s,a) + \pbr{\Q_\mu Q}(s,a)
$$ and
$$
    (\Tq Q)(s,a) = r(s,a) + \max_{\mu\in\Pi} \pbr{ {\Q}_\mu Q}(s,a).
$$
Let $(s^*,a^*)$ represent some fixed state. Then the relative Bellman operator, relative to $(s^*,a^*)$ is defined as: $$
\pbr{\widetilde{\T}^{\mathsf{Q}}_\mu Q}(s,a) = r(s,a) + \pbr{\Q_\mu Q}(s,a) - r(s^*,a^*) - \pbr{\Q_\mu Q}(s^*,a^*).
$$ 
The algorithm, thus adapted to state action relative value function is given below:

\begin{algorithm}[H]
\caption{Approximate Policy Iteration: Average Reward}\label{Alg: API_APX}
\begin{algorithmic}[1]
\State Require $Q_0 \in \R^n$ 
\For{$k = 0,1,2,\ldots$} 
\State 1. Compute $\mu_{k+1}  \in  \Pi$ such that $\|\T Q_k - \T_{\mu_{k+1}}Q_k\|_\infty \leq \epsilon_k$ 
\State 2. Compute $Q_{k+1}$ such that $\|Q_{k+1}-Q_{\mu_{k+1}}\|_\infty \leq \delta_{k+1}$  
\State \hspace{3mm} where $Q_{\mu_{k+1}}=\lim_{m\to\infty}\pbr{{\widetilde{\T}^{\mathsf{Q}}_{\mu_{k+1}}}}^{m}Q_k$ 
\EndFor
\end{algorithmic}
\end{algorithm}

Define
\alns{ 
g_k(s,a) = \pbr{\Tq Q_k - Q_k}(s,a),\qquad\forall (s,a)\in\S\times\A,
}
and set
\alns{ 
l_k = \min_{(s,a)\in\S\times\A} g_k(s,a),\qquad u_k = \max_{(s,a)\in\S\times\A} g_k(s,a).
}

We prove the convergence in three parts:
\begin{enumerate}
    \item We prove that $l_k - \epsilon_k \leq  {J}_{\mu_{k+1}} \leq  {J}^* \leq u_k$. \label{item1}
    \item We use the result from~\ref{item1} to prove Lemma 4.1.
    \item We iteratively employ the result in Lemma 4.1 to prove Proposition 4.2.
\end{enumerate}

\subsubsection{Proof of part 1}
By definition, we have
\alns{
l_k \onevec &\leq \Tq Q_k - Q_k.
}
Since $\|\Tq Q_k - \Tq_{\mu_{k+1}}Q_k\|_\infty \leq \epsilon_k$, it follows that
\alns{
    l_k \onevec  &\leq \Tq_{\mu_{k+1}}Q_k - Q_k +\epsilon_k \onevec\\
      & = r + \Q_{\mu_{k+1}}Q_k - Q_k +\epsilon_k \onevec.
}
Multiplying by $\Q^*_{\mu_{k+1}}$, we get 
\aln{
    \Q_{\mu_{k+1}}^*l_k \onevec & \leq \Q_{\mu_{k+1}}^*r + \Q_{\mu_{k+1}}^*\Q_{\mu_{k+1}}Q_k-\Q_{\mu_{k+1}}^*Q_k + \Q_{\mu_{k+1}}^* \epsilon_k \onevec \nonumber \\
    & = \Q_{\mu_{k+1}}^*r + \epsilon_k \onevec \label{eq: st}
      }
where $\Q_{\mu_{k+1}}^*$ is a matrix whose rows are identical and are the invariant distribution associated with the probability transition kernel $\Q_{\mu_{k+1}}$. Note that $\Q_{\mu_{k+1}}^* \onevec = \onevec $, and that $\Q_{\mu_{k+1}}^*\Q_{\mu_{k+1}} = \Q_{\mu_{k+1}}^*$.

Consider the Bellman Equation corresponding to the state-action relative value function associated with the policy $\mu_{k+1}$:
\alns{ 
 {J}_{\mu_{k+1}} + Q_{\mu_{k+1}} = r + \Q_{\mu_{k+1}} Q_{\mu_{k+1}}
}
It follows that
\alns{ 
\Q^*_{\mu_{k+1}}  {J}_{\mu_{k+1}} + \Q^*_{\mu_{k+1}} Q_{\mu_{k+1}} = \Q^*_{\mu_{k+1}} r + \Q^*_{\mu_{k+1}} \Q_{\mu_{k+1}} Q_{\mu_{k+1}}.
}
Since $\Q_{\mu_{k+1}}^* \Q_{\mu_{k+1}} = \Q^*_{\mu_{k+1}}$, we have that
\alns{ {J}_{\mu_{k+1}} \onevec = \Q_{\mu_{k+1}}^* r.}
Hence, \cref{eq: st} yields
\alns{ 
l_k \leq  {J}_{\mu_{k+1}} + \epsilon_k.
}
Equivalently, we have
\alns{ 
l_k - \epsilon_k \leq  {J}_{\mu_{k+1}}.
}
From definition, 
\alns{
 {J}^* = \max_{\mu\in\Pi}  {J}_\mu \geq  {J}_{\mu_{k+1}}.
}
Let $\mu^*$ be the policy corresponding to the optimal average reward $ {J}^*$. Let $Q^*$ denote the state-action relative value function associated with the policy $\mu^*$. Note that $( {J}^*,Q^*)$ is the solution of the Bellman optimality equation, i.e.,
\alns{
 {J}^* + Q^* &= r + \max_{\mu\in\Pi} \Q_\mu Q^* \\
&= r + \Q_{\mu^*} Q^*.
}
Then,
\alns{
u_k \onevec &\geq \Tq Q_k - Q_k \\
      & \stackrel{(a)}{\geq} \Tq_{\mu^*}Q_k - Q_k \\
      & = r + \Q_{\mu^*}Q_k - Q_k, 
}
where (a) is due to the fact that $\Tq$ is the Bellman optimality operator. Therefore, we have
\alns{
    \Q_{\mu^*}^* u_k \onevec & \geq \Q_{\mu^*}^*r + \Q_{\mu^*}^*\Q_{\mu^*}Q_k - \Q_{\mu^*}^* Q_k }
Equivalently,
\alns{
    u_k \onevec & \geq \Q^*_{\mu^*} r
    }
Therefore, we conclude that
\alns{
    u_k & \geq  {J}^* 
}
Hence, for all $ k \in \N$,
\aln{
l_k - \epsilon_k \leq  {J}_{\mu_{k+1}}  \leq  {J}^* \leq u_k \label{eq:item1eq}
}

\subsubsection{Proof of Lemma 4.1} 
Recall that
\begin{align*}
    g_k(s,a) = \pbr{\Tq Q_k - Q_k}(s,a)   \geq\left(\Tq_{\mu_{k}} Q_{k}-Q_{k}\right)(s,a),
\end{align*} 
where the inequality follows by the fact that $\Tq$ is the Bellman optimality operator.  Note that the Bellman operator $\Tq_{\mu_{k}}$ is shift-invariant, i.e., for all $p \in \mathbb{R}$:
\begin{align*}
    \Tq_{\mu_{k}}\left(Q_{k}+p \onevec\right)&=r+\Q_{\mu_{k}}\left(Q_{k}+p \onevec\right) \\ &=r +\Q_{\mu_{k}} Q_{k}+p \onevec\\
        &=\Tq_{\mu_{k}} Q_{k}+p \onevec
\end{align*}
Recall that $\left\|Q_{k}-Q_{\mu_{k}}\right\|_{\infty} \leq \delta_k$. Hence, we have
\begin{align}
    g_{k}(s,a) & \geq \Tq_{\mu_k}(Q_{\mu_{k}}-\delta_k \onevec)(s,a) - (Q_{\mu_{k}}(s,a)+\delta_k) \nonumber \\
& \geq\left(\Tq_{\mu_{k}} Q_{\mu_{k}}-Q_{\mu_{k}}\right)(s,a)-2 \delta_k \label{eq:gkx}
\end{align}
Recall that $Q_{\mu_{k}} = \lim_{m\to\infty}\pbr{\widetilde{\T}^{\mathsf{Q}}_{\mu_{k+1}}}^{m}Q_k$. Therefore,
\alns{
\pbr{\widetilde{\T}^{\mathsf{Q}}_{\mu_{k}}}^{m} Q_{k-1} 
        & =\pbr{\widetilde{\T}^{\mathsf{Q}}_{\mu_{k}}}^{m-1} \pbr{\widetilde{\T}^{\mathsf{Q}}_{\mu_{k}}} Q_{k-1} \\
        & =\pbr{\widetilde{\T}^{\mathsf{Q}}_{\mu_{k}}}^{m-1}\left(r+\Q_{\mu_{k}} Q_{k-1}- r(s^*,a^*) \onevec - \left(\Q_{\mu_k} Q_{k-1}\right)(s^*,a^*)\onevec \right) \\
        & =\pbr{\widetilde{\T}^{\mathsf{Q}}_{\mu_{k}}}^{m-1}\left(\Tq_{\mu_{k}} Q_{k-1}-\left(\Tq_{\mu_k} Q_{k-1}\right)(s^*,a^*) \onevec\right) \\
        & =\pbr{\widetilde{\T}^{\mathsf{Q}}_{\mu_{k}}}^{m-2} \pbr{\widetilde{\T}^{\mathsf{Q}}_{\mu_{k}}}\left(\Tq_{\mu_{k}} Q_{k-1}-\left(\Tq_{\mu_k} Q_{k-1}\right)(s^*,a^*) \onevec\right) \\
        & = \pbr{\widetilde{\T}^{\mathsf{Q}}_{\mu_{k}}}^{m-2}\bigg(r+\Q_{\mu_k}\pbr{\Tq_{\mu_k}Q_{k-1}-\left(\Tq_{\mu_k} Q_{k-1}\right)(s^*,a^*) \onevec}\\
        &\qquad\qquad\qquad-r(s^*,a^*)\onevec - \left(\Q_{\mu_k}\pbr{\Tq_{\mu_k}Q_{k-1}-\left(\Tq_{\mu_k} Q_{k-1}\right)(s^*,a^*) \onevec}\right)(s^*,a^*)\onevec\bigg)\\
        & = \pbr{\widetilde{\T}^{\mathsf{Q}}_{\mu_{k}}}^{m-2} \bigg(r+\Q_{\mu_k}\pbr{\Tq_{\mu_k}Q_{k-1}}-\left(\Tq_{\mu_k} Q_{k-1}\right)(s^*,a^*) \onevec\\
        &\qquad\qquad\qquad-r(s^*,a^*)\onevec-\Q_{\mu_k}\pbr{\Tq_{\mu_k}Q_{k-1}}(s^*,a^*)\onevec+\left(\Tq_{\mu_k} Q_{k-1}\right)(s^*,a^*) \onevec\bigg) \\
        & = \pbr{\widetilde{\T}^{\mathsf{Q}}_{\mu_{k}}}^{m-2}\pbr{ \pbr{\Tq_{\mu_k}}^2 Q_{k-1}-\pbr{\Tq_{\mu_k}}^2Q_{k-1}(s^*,a^*)\onevec}
}
Iterating, we thus obtain, 
\begin{equation*}
    Q_{\mu_{k}} = \lim_{m\to\infty} (\Tq_{\mu_k})^mQ_{k-1}-(\Tq_{\mu_k})^m Q_{k-1}(s^*,a^*)\onevec
\end{equation*}
Substituting back in~\cref{eq:gkx},
\begin{align*}
    g_k(s,a) & \geq\bigg(\Tq_{\mu_{k}}  \pbr{\lim_{m\to\infty}  (\Tq_{\mu_k})^m Q_{k-1} - (\Tq_{\mu_k})^m Q_{k-1}(s^*,a^*)\onevec} \\
    &\qquad\qquad\qquad\qquad- \lim_{m\to\infty}  \pbr{\Tq_{\mu_k}}^m Q_{k-1}-\pbr{\Tq_{\mu_k}}^m Q_{k-1}(s^*,a^*)\onevec\bigg)(s,a) - 2 \delta_k \\
    & = \bigg(\Tq_{\mu_{k}}  \pbr{\lim_{m\to\infty} (\Tq_{\mu_k})^m Q_{k-1}} - \lim_{m\to\infty} (\Tq_{\mu_k})^m  Q_{k-1}(s^*,a^*)\onevec  \\
    &\qquad\qquad\qquad\qquad- \lim_{m\to\infty}  \pbr{\Tq_{\mu_k}}^m Q_{k-1}  - \lim_{m\to\infty}  \pbr{\Tq_{\mu_k}}^m Q_{k-1}(s^*,a^*)\onevec\bigg) (s,a)  -  2\delta_k \\
    & = \pbr{\Tq_{\mu_{k}}\pbr{\lim_{m\to\infty} \pbr{\Tq_{\mu_k}}^m Q_{k-1}}-\lim_{m\to\infty} \pbr{\Tq_{\mu_k}}^m Q_{k-1}}(s,a) - 2\delta_k
\end{align*}
Since $\Tq_{\mu_k}$ is a continuous operator, it is possible to take the limit outside. We thus obtain,
\begin{align*}
    g_k(s,a) &\geq \lim_{m\to\infty} \pbr{ \pbr{\Tq_{\mu_k}}^{m+1}Q_{k-1} - \pbr{\Tq_{\mu_k}}^m Q_{k-1}}(s,a)  -  2\delta_k \\
    & = \lim_{m\to\infty}\pbr{ \pbr{\Tq_{\mu_k}}^{m} \Tq_{\mu_k}Q_{k-1}-\pbr{\Tq_{\mu_k}}^m Q_{k-1}}(s,a) - 2\delta_k \\
    & = \lim_{m\to\infty}\pbr{\Q^m_{\mu_k}\pbr{\Tq_{\mu_k}Q_{k-1}-Q_{k-1}}}(s,a) - 2\delta_k \\
    & = \Q^*_{\mu_k}\pbr{\Tq_{\mu_k}Q_{k-1}-Q_{k-1}}(s,a) - 2\delta_k.
\end{align*}
Since $\left\|\Tq_{\mu_{k}} Q_{k-1}-\Tq Q_{k-1}\right\|_{\infty} \leq \epsilon_{k-1}$,
\begin{align*}
    g_k(s,a) &\geq \Q^*_{\mu_k}\pbr{\Tq Q_{k-1}-Q_{k-1}}(s,a) - 2\delta_k - \epsilon_{k-1} \\
    & = \pbr{\Q^*_{\mu_k}g_{k-1}}(s,a) - 2\delta_k - \epsilon_{k-1}.
\end{align*}
Recall the definition of $\gamma > 0$ in Lemma 4.1. Note that all entries of $\Q^*_{\mu_k}$ are bounded from below by $\gamma$. Hence, we have
\begin{align*}
    g_k(s,a) \geq (1-\gamma)l_{k-1} + \gamma u_{k-1}- 2\delta_k - \epsilon_{k-1}.
\end{align*}
Since the above inequality is true for all states $(s,a)\in\S\times\A$, we obtain, 
\begin{align*}
    l_k \geq (1-\gamma)l_{k-1} + \gamma u_{k-1}- 2\delta_k- \epsilon_{k-1}.
\end{align*}
The above inequality combined with \cref{eq:item1eq}, yields
\begin{align*}
    l_k \geq (1-\gamma)l_{k-1} + \gamma  {J}^*- 2\delta_k - \epsilon_{k-1}.
\end{align*}
Upon rearranging the above inequality, we obtain the result, that is, 
\begin{align*}
    \pbr{ {J}^* - l_k} \leq (1-\gamma)\pbr{ {J}^* - l_{k-1}} + 2\delta_k +\epsilon_{k-1}.
\end{align*}
Subsequently,
\begin{align*}
    \pbr{ {J}^* - \min_{(s,a)} \pbr{\Tq Q_k - Q_k}(s,a)} &\leq (1-\gamma)\pbr{ {J}^* - \min_{(s,a)} \pbr{\Tq Q_{k-1} - Q_{k-1}}(s,a)} \\
    &\qquad\qquad\qquad+ 2\left\| Q_k - Q_{\mu_k}\right\|_{\infty} +\left\| \Tq Q_{k-1} - \Tq_{\mu_k}Q_{k-1} \right\|_{\infty}
\end{align*}

\subsubsection{Proof of Proposition 4.2}
We now prove Proposition 4.2. From Lemma 4.1, we have
\begin{equation}\label{eq131_B}
    \pbr{ {J}^* - l_k} \leq (1-\gamma)\pbr{ {J}^* - l_{k-1}} + 2\delta_k +\epsilon_{k-1}.
\end{equation}
To iterate over \cref{eq131_B}, we need to ensure the terms are non-negative. Note that by the first part,
\begin{equation*}
    l_k - \epsilon_k \leq  {J}^*.
\end{equation*}
Hence, rearranging the terms thus yields,
\begin{equation*}
    \pbr{ {J}^* - l_k + \epsilon_k} \leq (1-\gamma)\pbr{ {J}^* - l_{k-1}+\epsilon_{k-1}} + \epsilon_k + 2\delta_k + \epsilon_{k-1}-(1-\gamma)\epsilon_{k-1}.
\end{equation*}
This is equivalent to
\alns{ 
\pbr{ {J}^* - l_k + \epsilon_k} \leq \pbr{1-\gamma} \pbr{ {J}^* - l_{k-1} + \epsilon_{k-1}} + \pbr{\epsilon_k + \gamma \epsilon_{k-1}} + 2\delta_k.
}

Let $$a_k =  {J}^* - l_k + \epsilon_k.$$
Then, we obtain, 
\begin{align*}
    a_k &\leq(1-\gamma)a_{k-1} + \epsilon_k + \gamma \epsilon_{k-1} + 2\delta_k  \\
    & \leq (1-\gamma)\sbr{(1-\gamma)a_{k-2} + 
    \epsilon_{k-1} + \gamma\epsilon_{k-2} + 2\delta_{k-1}}+\epsilon_{k} + \gamma\epsilon_{k-1} + 2\delta_k \\
    & = (1-\gamma)^k a_0 + \sum_{\ell = 0}^{k-1}(1-\gamma)^{\ell} \epsilon_{k-\ell} + \gamma \sum_{\ell =0}^{k-1} (1-\gamma)^\ell \epsilon_{k-1-\ell} + 2\sum_{\ell=0}^{k-1} (1-\gamma)^\ell \delta_{k-\ell}
\end{align*}
Since $\epsilon_0=0$,
\alns{ 
a_k \leq \pbr{1-\gamma}^ka_0 + \epsilon_k + \sum_{\ell = 1}^{k-1} \pbr{1-\gamma}^{\ell-1} \epsilon_{k-\ell} + 2 \sum_{\ell = 0}^{k-1}\pbr{1-\gamma}^\ell \delta_{k-\ell}
}
Substituting for $a_k$, we get
\alns{
\pbr{ {J}^* - l_k} \leq \pbr{1-\gamma}^k \pbr{ {J}^* - l_0} + \sum_{\ell = 1}^{k-1} \pbr{1-\gamma}^{\ell-1}\epsilon_{k-\ell} + 2\sum_{\ell = 0}^{k-1}\pbr{1-\gamma}^\ell \delta_{k-\ell}.
}
Since from part 1, we know that $ {J}^* -  {J}_{\mu_{k+1}} \leq  {J}^* - l_k + \epsilon_k$, we have
\alns{ 
 {J}^* -  {J}_{\mu_{k+1}} &\leq \pbr{1-\gamma}^k \sbr{ {J}^* - \min_i \pbr{\Tq Q_0 - Q_0}(i)} \\
& + \sum_{\ell = 1}^{k-1} \pbr{1-\gamma}^{\ell-1}\sbr{ \left\| \Tq_{\mu_{k+1-\ell}} Q_{k-\ell} - \Tq Q_{k-\ell}\right\|_{\infty}}+ { \left\| \Tq_{\mu_{k+1}} Q_{k} - \Tq Q_{k}\right\|_{\infty}} \\
&+ 2\sum_{\ell = 0}^{k-1}\pbr{1-\gamma}^{\ell} \sbr{\left\|Q_{k-\ell} - Q_{\mu_{k-\ell}} \right\|_{\infty}}.
}
Taking expectation, we have
\alns{ 
\mathbb{E}\sbr{ {J}^*  -   {J}_{\mu_{k+1}}} &\leq \pbr{1 - \gamma}^k \mathbb{E}\sbr{ {J}^*  -  \min_i \pbr{\Tq Q_0 - Q_0}(i)} \\
& + \sum_{\ell = 1}^{k-1} \pbr{1  - \gamma}^{\ell-1} \mathbb{E}\sbr{ \left\| \Tq_{\mu_{k+1-\ell}} Q_{k-\ell}  -  \Tq Q_{k-\ell}\right\|_{\infty}} + \E\sbr{ \left\| \Tq_{\mu_{k+1}} Q_{k} - \Tq Q_{k}\right\|_{\infty}}\nonumber \\
&+ 2\sum_{\ell = 0}^{k-1}\pbr{1-\gamma}^{\ell} \mathbb{E}\sbr{\left\|Q_{k-\ell} - Q_{\mu_{k-\ell}} \right\|_{\infty}}.
}

\subsubsection{TD Learning for \texorpdfstring{$Q$}{Q}
 function}
One RL algorithm to estimate the state-action relative value function with linear function approximation, associated with a policy $\mu$ is TD($\lambda$). Finite-time sample complexity of TD($\lambda$) has been recently studied in \cite{zhang2021finite}. Invoking their results, restating it in terms of the state-action relative value function rather than the state relative value function, we characterize the sample complexity required to achieve certain accuracy in the approximation.

We estimate $Q_\mu(s,a)$ linearly by $\phi(s,a)^\top \theta^*_\mu$, for some $\theta^*_\mu\in\mathbb{R}^d$, where $\phi(s,a) = $ \newline $[\phi_1(s,a),\cdots,\phi_d(s,a)]^T \in \mathbb{R}^d$ is the feature vector associated with $(s,a)\in\S\times\A$. Here, $\theta^*_\mu$ is a fixed point of  $\Phi \theta = \Pi_{D,W_\phi} \T^{\mathsf{Q},\lambda}_\mu \Phi \theta$, where $\Phi$ is an $|\S| \times d$ matrix whose $k$-th column is $\phi_k$, $\T^{\mathsf{Q},\lambda}_\mu = (1-\lambda)\sum_{m=0}^\infty \lambda^m \left(\widehat{\T}^\mathsf{Q}_\mu\right)^{m+1}$ where $\widehat{\T}^\mathsf{Q}_\mu = \Tq_\mu -  {J}_\mu\onevec$, $D$ is the diagonal matrix with diagonal elements given by the stationary distribution of the policy $\mu$ on $\S\times\A$, and $\Pi_{D,W_\phi} = \Phi (\Phi^\top D \Phi)^{-1}\Phi^\top D$ is the projection matrix onto $W_\Phi = \{\Phi\theta | \theta\in\mathbb{R}^d \}$ with respect to the norm $\|\cdot\|_D$. Note that this fixed point equation may have multiple solutions. In particular, if $\Phi \theta_e = \onevec$, then $\theta^*_\mu + p \theta_e$ is also a solution for any $p\in\mathbb{R}$. Hence, we focus on the set of equivalent classes $E$ where we say $\theta_1\sim\theta_2$ if $\Phi(\theta_1 - \theta_2) = \onevec$. The value of $\mathbb{E}[\| \Pi_{2,E}(\theta - \theta^*_\mu) \|_2^2]$ characterizes the accuracy of our approximation, where $\Pi_{2,E}$ is the projection onto $E$ with respect to $\|\cdot\|_2$. Suppose that $\{\phi_1,\phi_2,\cdots,\phi_d\}$ are linearly independent and that $\max_{(s,a)\in\S\times\A}\|\phi(s,a)\|_2 \leq 1$. 

Upon making a new observation $(s_{t+1}, a_{t+1})$ for $t\geq 0$, the average reward $TD(\lambda)$ uses the following update equations: 
\begin{equation}
\begin{aligned}\label{eq:TDLearning}
    &\text{Eligibility trace: } z_t = \lambda z_{t-1} + \phi(s_t,a_t)\\
    &\text{TD error: } d_t = r(s_t,a_t) -  {J}_t + \phi(s_{t+1},a_{t+1})^\top\theta_t \\
    &\qquad\qquad\qquad\qquad- \phi(s_t,a_t)^\top \theta_t\\
    &\text{average-reward update: } {J}_{t+1} =  {J}_{t} + c_\alpha \beta_t(r(s_t,a_t) -  {J}_{t}) ,\\
    &\text{parameter update: }\theta_{t+1} = \theta_t + \beta_t \delta_t(\theta) z_t,
\end{aligned}
\end{equation}
where $\beta_t$ is the scalar step size, $c_\alpha > 0$ is a constant, and $z_{-1}$ is initialized to be zero. Following the same argument as in \cite{zhang2021finite}, we get the following theorem which characterizes the number of samples required to get a certain accuracy in the approximation.
\begin{theorem}\label{thm:samplecomplex}
    Let $\beta_t = \frac{c_1}{t + c_2}$, and suppose that the positive constants $c_1$, $c_2$, and $c_\alpha$ are properly chosen. Then, the number of samples required to ensure an approximation accuracy of $\mathbb{E}[\| \Pi_{2,E}(\theta - \theta^*_\mu) \|_2^2] \leq \delta$, is given by
    \begin{align*}
        \tau = O\left(\frac{K \log(\frac{1}{\Delta})\|\theta^*_\mu\|_2^2}{\Delta^4 (1-\lambda)^4 \delta^2}\right),
    \end{align*}
    where $K$ is the minimum mixing time constant across all probability transition matrices induced by the policies,
    \begin{align*}
        \Delta = \min_{\|\theta\|_2 = 1, \theta\in E} \theta^T \Phi^T D (I - \Q^{\lambda}) \Phi \theta > 0,
    \end{align*}
    and $\Q^{\lambda} = (1-\lambda) \sum_{m=0}^\infty \lambda^m \Q^{m+1}$.
\end{theorem}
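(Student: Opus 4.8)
The plan is to derive the bound as a direct instantiation of the finite-time analysis of average-reward $\mathrm{TD}(\lambda)$ in \cite{zhang2021finite}, after recasting the state-action problem as an ordinary state-value problem. The central observation is that the recursion in \cref{eq:TDLearning} is precisely the average-reward $\mathrm{TD}(\lambda)$ algorithm of \cite{zhang2021finite} run on the \emph{augmented} chain whose ``state'' is the pair $(s,a)\in\S\times\A$ and whose transition kernel is $\Q_\mu(s',a'\mid s,a)=\mu(a'\mid s')\P(s'\mid s,a)$. Under this identification, the feature map $\phi(s,a)$, the steady-state weighting $D$, the projected fixed-point equation $\Phi\theta=\Pi_{D,W_\phi}\T^{\mathsf{Q},\lambda}_\mu\Phi\theta$, and the coupled average-reward iterate $J_t$ all play exactly the roles of their state-value counterparts in \cite{zhang2021finite}. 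The first step is therefore bookkeeping: restate each object of \cite{zhang2021finite} with $\S$ replaced by $\S\times\A$ and $\P_\mu$ replaced by $\Q_\mu$, and note that the relative value function is identified only up to the constant direction $\onevec=\Phi\theta_e$, which is why the accuracy is measured on the quotient space $E$ through $\Pi_{2,E}$.

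The second step is to verify that the structural hypotheses required by \cite{zhang2021finite} transfer to $\Q_\mu$. Because $\Q_\mu$ inherits irreducibility, and aperiodicity after the Schweitzer transformation, from $\P_\mu$ (as already noted when the $Q$-formulation was introduced), the augmented chain is ergodic with a well-defined mixing time; taking the minimum over the finitely many policies $\mu_1,\dots,\mu_T$ appearing in the analysis yields the uniform constant $K$. Linear independence of $\{\phi_1,\dots,\phi_d\}$ and the normalization $\max_{(s,a)}\|\phi(s,a)\|_2\le 1$ are assumed directly. The one genuinely analytic point is the strict positivity $\Delta>0$: one must show that $\Phi^\top D(I-\Q^\lambda)\Phi$ is positive definite on $E$. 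This follows because $\Q^\lambda=(1-\lambda)\sum_{m\ge 0}\lambda^m\Q^{m+1}$ is itself an ergodic stochastic matrix with stationary weighting $D$, hence a nonexpansion in $\|\cdot\|_D$; consequently $\theta^\top\Phi^\top D(I-\Q^\lambda)\Phi\theta>0$ whenever $\Phi\theta$ is not proportional to $\onevec$, i.e. for all nonzero $\theta\in E$. I expect this verification, combined with showing that the mixing-time constants can be taken uniform over the policy class, to be the main obstacle, since it is the part of the argument that depends on the augmented geometry rather than being a literal translation from \cite{zhang2021finite}.

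The third step invokes the finite-time bound itself. Viewing $(J_t,\theta_t)$ as a single-time-scale stochastic approximation iterate with Markovian noise (the average-reward and parameter updates share the schedule $\beta_t$ up to the fixed ratio $c_\alpha$), one adopts the Lyapunov function $\|\Pi_{2,E}(\theta_t-\theta^*_\mu)\|_2^2$ together with the average-reward error and establishes a drift inequality whose contraction coefficient is proportional to $\Delta$ and whose bias and variance terms are controlled by the mixing time $K$ and by the eligibility-trace magnitude, the latter contributing the $(1-\lambda)^{-4}$ dependence. With the prescribed step size $\beta_t=c_1/(t+c_2)$ and $c_1,c_2,c_\alpha$ chosen so that the transient burn-in is dominated, the standard recursion for such drift inequalities gives an expected mean-squared error decaying at the advertised rate. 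Finally, inverting this bound for the target accuracy $\E\big[\|\Pi_{2,E}(\theta-\theta^*_\mu)\|_2^2\big]\le\delta$ and absorbing the $\log(1/\Delta)$ burn-in term produces the stated sample complexity $\tau=O\!\big(K\log(1/\Delta)\|\theta^*_\mu\|_2^2/(\Delta^4(1-\lambda)^4\delta^2)\big)$, which completes the argument.
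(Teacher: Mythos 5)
Your proposal is correct and follows essentially the same route as the paper: the paper's entire proof is the single remark that the result is ``a simple adaptation of the proof in \cite{zhang2021finite} for the state relative value function,'' i.e., exactly your reduction of the state-action problem to the state-value analysis on the augmented chain over $\S\times\A$ with kernel $\Q_\mu$. Your write-up actually supplies more of the adaptation than the paper does (the transfer of irreducibility/aperiodicity to $\Q_\mu$, the positive-definiteness of $\Phi^\top D(I-\Q^\lambda)\Phi$ on $E$, and the uniformity of the mixing constant), all of which is consistent with the assumptions the paper states around the theorem.
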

\begin{proof}
    The proof is a simple adaptation of the proof in \cite{zhang2021finite} for the state relative value function.
\end{proof}
Let $Q_k = \Phi \theta_k$, where $\theta_k$ is the output of TD($\lambda$) algorithm with $T$ samples. Then, we have
\begin{align*}
    &\|Q_{k} - Q_{\mu_{k}}\|_\infty \leq 
    \|\Phi(\theta_{k} - \theta_{\mu_{k}}^*)\|_\infty + \|\Phi \theta_{\mu_{k}}^* - Q_{\mu_{k}}\|_\infty,
\end{align*}
where the second term above is the error associated with the choice of $\Phi$. 

Since $\max_{(s,a)\in\S\times\A}\|\phi(s,a)\|_2 \leq 1$, we get
\begin{align*}
    \delta_k = d\|\theta_{k} - \theta_{\mu_{k}}^*\|_\infty + \|\Phi \theta_{\mu_{k}}^* - Q_{\mu_{k}}\|_\infty.
\end{align*}
Using \cref{thm:samplecomplex}, we get a bound on the value of $\|\theta_{k} - \theta_{\mu_{k}}^*\|_\infty$. Note that if $\Phi \theta_e = \onevec$ for some $\theta_e \in \mathbb{R}^d$, then we can pick $\theta_{k+1}$ and $\theta_{\mu_{k+1}}^*$ so that $\phi(s^*,a^*)^T \theta_{k+1}= \phi(s^*,a^*)^T \theta_{\mu_{k+1}}^* = 0$. Otherwise, there is a unique choice for $\theta_{\mu_{k+1}}^*$, and we have $E = \mathbb{R}^d$. 

The expected value of learning error can thus be expressed as:
\begin{align*}
    \E\sbr{\delta_k} = \E\sbr{\delta_{\mathsf{TD},k}} + \E\sbr{\delta_{0,k}}
\end{align*}
where ${\delta_{\mathsf{TD},k}}=d\|\theta_{k+1} - \theta_{\mu_{k+1}}^*\|_\infty$ represents the TD learning error of the parameter vector $\theta_{\mu_{k+1}}^*$ and ${\delta_{0,k}}=\|\Phi \theta_{\mu_{k+1}}^* - Q_{\mu_{k+1}}\|_\infty$ represents the function approximation error associated with the span of the feature vector matrix $\Phi$. 

${\delta_{\mathsf{TD},k}}$ has a direct dependence on the number of samples utilized for TD learning and the mixing time of the corresponding Markov Chain induced by the policy $\mu_{k}.$ More precisely, from Theorem 3.1, 
\begin{align*}
     \E\sbr{\delta_{\mathsf{TD},k}} = O\left(\sqrt{\frac{K \log(\frac{1}{\Delta})\|\theta^*_{\mu_k}\|_2^2}{\Delta^4 (1-\lambda)^4 \tau}}\right),
\end{align*}
Hence as long as the mixing constant $\Delta$ is uniformly greater than zero, and the feature vectors $\theta^*_{\mu_k}$ are such that they are uniformly upper bounded in $k$, it is true that 
\begin{align*}
     \E\sbr{\delta_{\mathsf{TD},k}} = \frac{C}{\sqrt{\tau}},
\end{align*}
for some constant $C>0$.

Next, we prove the corollaries for specific policy improvement algorithms. Since the policy improvement part does not depend on whether the problem is a discounted-reward problem or an average reward problem, we can borrow the results from \cite{chen2022sample} to identify $\epsilon_k$ in each of the corollaries.

\subsubsection{Proof of Corollary 4.3}

Recall the greedy update rule. Let $a^* = \argmax_{a'} Q_k(s,a')$. Given a parameter $\beta>0$, at any time instant $k$, the greedy policy update $\mu_{k+1}$ is given by:
    \begin{align}
     \mu_{k+1}(a|s)= 
\begin{cases}
    \frac{1}{\beta |\A|},&  \text{if } a\neq a^*\\
    \frac{1}{\beta |\A|}+1-\frac{1}{\beta},              & \text{if } a= a^*
\end{cases}
    \end{align}
    Let $\eta_k = \max_{\substack{s'\in\S\\a'\in\A}}|\Q_k(s',a')|$. The policy improvement approximation can be shown to be the following:
    \begin{align*}
        \epsilon_k &= \pbr{\Tq Q_k - \Tq_{\mu_{k+1}}Q_k}(s,a) \\& \leq \sum_{s'\in\S}\P(s'|s,a)\frac{2}{\beta}\max_{a'\in\A}|Q_k(s',a')| \\ &\leq \frac{2\eta_k}{\beta}.
    \end{align*}
Recall the error due to TD Learning 
\begin{align*}
    \E\sbr{\delta_k} = \E\sbr{\delta_{\mathsf{TD},k}} + \E\sbr{\delta_{0,k}}.
\end{align*}
Let $\overline{\delta}_0 = \max_t \delta_{0,t}$. Then we obtain the following:
\begin{align*}
    \E\sbr{\delta_k} = \overline{\delta}_0 + \frac{C}{\sqrt{\tau}}.
\end{align*}
Substituting for expressions in Proposition 4.2, we obtain, 
\begin{align*}
    \E\sbr{J^*-J_{\mu_{T+1}}} &\leq  \pbr{1-\gamma}^T \mathbb{E}\sbr{ {J}^* - \min_{i} \pbr{\Tq Q_0 - Q_0}(i)} \\
                        & + 2 \sum_{\ell  = 0}^{T-1} \pbr{1-\gamma}^{\ell} \pbr{\overline{\delta}_0 + \frac{C}{\sqrt{\tau}}} + \sum_{\ell = 1}^{T-1} \pbr{1-\gamma}^{\ell-1}\frac{2\eta_{T-\ell}}{\beta} + \frac{2\eta_{T}}{\beta}.
\end{align*}
Let $c_0 = \mathbb{E}\sbr{ {J}^* - \min_{i} \pbr{\Tq Q_0 - Q_0}(i)}$ be the error associated with the initial condition. Let $\overline{\eta}=\max_t \eta_t$ ($\overline{\eta}$ can be the uniform upper bound of the estimates $Q_k$ of relative state action value function $Q_{\mu_k}$ over $k.$)
Then we obtain the result in the corollary,
\begin{align}
    \E\sbr{J^*-J_{\mu_{T+1}}} & \leq (1-\gamma)^T c_0 + \frac{2}{\gamma}\pbr{\overline{\delta}_0+\frac{C}{\sqrt{\tau}}} + \pbr{\frac{1+\gamma}{\gamma}}\frac{2\overline{\eta}}{\beta}.
\end{align}

\subsubsection{Proof of Corollary 4.4}
Recall the softmax policy update. Given a parameter $\beta>0$, at any time instant $k$, the Softmax update policy $\mu_{k+1}$ is:
    \begin{equation}\label{eq:softmax}
        \mu_{k+1}(a|s) = \frac{\exp\pbr{\beta Q_k(s,a)}}{\sum_{a'\in\A}\exp\pbr{\beta Q_k(s,a')}}.
    \end{equation}
The policy improvement approximation for this update rule turns out to be time independent and is given by:
        \begin{align*}
        \epsilon_k = \pbr{\Tq Q_k - \Tq_{\mu_{k+1}}Q_k}(s,a) \leq \frac{\log |\A|}{\beta}.
    \end{align*}
Given $\overline{\delta}_0 = \max_t \delta_{0,t}$ the following is true,
\begin{align*}
    \E\sbr{\delta_k} = \overline{\delta}_0 + \frac{C}{\sqrt{\tau}}.
\end{align*}
Substituting for expressions in Proposition 4.2, we obtain, 
\begin{align*}
    \E\sbr{J^*-J_{\mu_{T+1}}} &\leq  \pbr{1-\gamma}^T \mathbb{E}\sbr{ {J}^* - \min_{i} \pbr{\Tq Q_0 - Q_0}(i)} \\
                        & + 2 \sum_{\ell  = 0}^{T-1} \pbr{1-\gamma}^{\ell} \pbr{\overline{\delta}_0 + \frac{C}{\sqrt{\tau}}} + \sum_{\ell = 1}^{T-1} \pbr{1-\gamma}^{\ell-1}\frac{\log|\A|}{\beta} + \frac{\log|\A|}{\beta}.
\end{align*}
Let $c_0 = \mathbb{E}\sbr{ {J}^* - \min_{i} \pbr{\Tq Q_0 - Q_0}(i)}$ be the error associated with the initial condition. Then we obtain the result in the corollary, 
\begin{align}
    \E\sbr{J^*-J_{\mu_{T+1}}} & \leq (1-\gamma)^T c_0 +\frac{2}{\gamma}\pbr{\overline{\delta}_0+\frac{C}{\sqrt{\tau}}}+ \pbr{\frac{1+\gamma}{\gamma}}\frac{\log\pbr{|\A|}}{\beta}.
\end{align}

\subsubsection{Proof of Corollary 4.5}
Recall the mirror descent update. Given $\beta>0$ the mirror descent update is given by:
            \aln{
                \mu_{k+1}(a|s) = \frac{\mu_k(a|s)e^{\beta Q_k(s,a)}}{\sum_{a'\in\A}\mu_k(a'|s)e^{\beta Q_k(s,a')}}
            \label{update_nontab}
            }
The policy improvement error for this update rule and is given by:
        \begin{align*}
        \epsilon_k &= \pbr{\Tq Q_k - \Tq_{\mu_{k+1}}Q_k}(s,a)\\
                & \leq \frac{1}{\beta} \log\frac{1}{\min_{s\in\S}\mu_{k+1}\pbr{a^*(s)|s}}.
    \end{align*}
where $a^*(s)$ is the optimal action at state $s.$ Let $\omega_{k+1} = \min_{s\in\S}\mu_{k+1}\pbr{a^*(s)|s}$
Given that the TD learning error is of the form
\begin{align*}
    \E\sbr{\delta_k} = \overline{\delta}_0 + \frac{C}{\sqrt{\tau}},
\end{align*}
Substituting for expressions in Proposition 4.2, we obtain, 
\begin{align*}
    \E\sbr{J^*-J_{\mu_{T+1}}} &\leq  \pbr{1-\gamma}^T \mathbb{E}\sbr{ {J}^* - \min_{i} \pbr{\Tq Q_0 - Q_0}(i)} \\
                        & + 2 \sum_{\ell  = 0}^{T-1} \pbr{1-\gamma}^{\ell} \pbr{\overline{\delta}_0 + \frac{C}{\sqrt{\tau}}} + \sum_{\ell = 1}^{T-1} \pbr{1-\gamma}^{\ell-1}\frac{2}{\beta}\log{\frac{1}{\omega_{T-\ell}}} + \frac{2}{\beta}\log{\frac{1}{\omega_T}}.
\end{align*}
Let $c_0 = \mathbb{E}\sbr{ {J}^* - \min_{i} \pbr{\Tq Q_0 - Q_0}(i)}$ be the error associated with the initial condition. Let $\underline{\omega}=\min_t \omega_t$ Then we obtain the result in the corollary, 
\begin{align}
    \E\sbr{J^*-J_{\mu_{T+1}}} & \leq (1-\gamma)^T c_0 +\frac{2}{\gamma}\pbr{\overline{\delta}_0+\frac{C}{\sqrt{\tau}}} + \pbr{\frac{1+\gamma}{\gamma\beta}}\log\pbr{\frac{1}{\underline{\omega}}}.
\end{align}

\subsubsection{Regret Analysis}
Recall the pseudo regret defined as follows:
\aln{ 
R_{\text{PS}}(K) = \sum_{t=1}^{K} \pbr{J^* - J_{\mu_t}},
}
where $\mu_t$ us the policy utilized at time $t$ and obtained through mirror descent, and $K$ is the time horizon. Since $K = \tau T$, we have
\aln{ 
R_{\text{PS}}(K) = \tau \sum_{t=1}^{T}\pbr{ J^* - J_{\mu_t}}.
}
Substituting for $J^* - J_{\mu_t}$, it is true that
\aln{ 
R_{\text{PS}}(K) = \tau \left(  \sum_{t=1}^T (1-\gamma)^T c_0 + \frac{1+\gamma}{\gamma \beta} \log\left( \frac{1}{\omega}\right) + \frac{2}{\gamma} \left( \overline{\delta}_0 + \frac{c_3}{\sqrt{\tau}}\right)  \right).
}
Let $\beta = \sqrt{\tau}$. The above regret can be simplified as:
\aln{ 
R_{\text{PS}}(K) = \frac{\tau c_0}{\gamma} + \frac{K}{\gamma \sqrt{\tau}} \left( (1+\gamma) \log\left( \frac{1}{\omega}\right) + 2 c_3 \right) + \frac{2K\overline{\delta}_0}{\gamma}
}
Let $c_5 = (1+\gamma) \log(1/\omega) + 2c_3$. We then have
\aln{ 
R_{\text{PS}}(K) = \frac{\tau c_0}{\gamma} + \frac{K c_5}{\gamma \sqrt{\tau}} + \frac{2K \overline{\delta}_0}{\gamma}.
}
Optimizing for regret involves equating $\frac{\tau c_0}{\gamma}$ and $\frac{K c_5}{\gamma \sqrt{\tau}}$. This yields $\tau = \left(\frac{K c_5}{c_0} \right)^{2/3}$. Further substituting for $\tau$ yields
\aln{ 
R_{\text{PS}}(K) &= \left( \frac{K c_3}{c_0}\right)^{2/3}\cdot \frac{c_0}{\gamma} + \frac{K c_5 c_0^{1/3}}{\gamma (K c_5)^{1/3}} + \frac{2K \overline{\delta}_0}{\gamma} \\
& = \frac{ (K c_5)^{2/3} \cdot c_0^{1/3}}{\gamma} + \frac{(K c_5)^{2/3}\cdot c_0^{1/3}}{\gamma} + \frac{2K \overline{\delta}_0}{\gamma} \\
& = \frac{2 c_5^{2/3} \cdot c_0^{1/3}\cdot K^{2/3}}{\gamma} + \frac{2K \overline{\delta}_0}{\gamma}.
}
Let $c_6 = 2 c_5^{2/3} c_0^{1/3}$. We have
\aln{ 
R_{\text{PS}}(K) = \frac{K^{2/3}c_6}{\gamma} + \frac{2K \overline{\delta_0}}{\gamma}.
}


\end{document}